\newif\ifdraft
\newcommand{\PreserveBackslash}[1]{\let\temp=\\#1\let\\=\temp}
\newcolumntype{C}[1]{>{\PreserveBackslash\centering}p{#1}}
\newcolumntype{R}[1]{>{\PreserveBackslash\raggedleft}p{#1}}
\newcolumntype{L}[1]{>{\PreserveBackslash\raggedright}p{#1}}
\setlist[itemize]{leftmargin=*}
\setlist[enumerate]{leftmargin=*}
\newcommand*{\rej}{{\ooalign{\lower.3ex\hbox{$\sqcup$}\cr\raise.4ex\hbox{$\sqcap$}}}}
\newenvironment{proof}{\emph{Proof:}}{\hfill$\square$}
\renewcommand{\algorithmicrequire}{\textbf{Input: }}
\renewcommand{\algorithmicensure}{\textbf{Output: }}
\newcommand{\ie}{\textit{i.e.,}\@\xspace}
\newcommand{\eg}{\textit{e.g.,}\@\xspace}
\DeclareRobustCommand\encircle[1]{\tikz[baseline=(char.base)]{\node[shape=circle,fill,inner sep=1pt] (char) {\textcolor{white}{#1}}}}
\newtheorem{theorem}{Theorem}
\def\adl@drawiv#1#2#3{%
        \hskip.5\tabcolsep
        \xleaders#3{#2.5\@tempdimb #1{1}#2.5\@tempdimb}%
                #2\z@ plus1fil minus1fil\relax
        \hskip.5\tabcolsep}
\newcommand{\cdashlinelr}[1]{%
  \noalign{\vskip\aboverulesep
           \global\let\@dashdrawstore\adl@draw
           \global\let\adl@draw\adl@drawiv}
  \cdashline{#1}
  \noalign{\global\let\adl@draw\@dashdrawstore
           \vskip\belowrulesep}}
\newcommand{\nlp}[1]{}
\newcommand{\prob}{log-likelihood\xspace}
\newcommand{\probs}{log-likelihoods\xspace}
\newcolumntype{x}[1]{>{\centering\arraybackslash\hspace{0pt}}p{#1}}
\newcommand{\new}[1]{\textcolor{black}{#1}}
\newcommand{\mynote}[1]{\textcolor{red}{[note: #1]}}
\newcommand{\mytodo}[1]{\textcolor{red}{[todo: #1]}}
\newcommand{\mycomment}[1]{\textcolor{red}{[comment: #1]}}
\newcommand{\chris}[1]{\textcolor{red}{Chris: #1}}
\newcommand{\ahmad}[1]{\textcolor{darkpastelgreen}{[Ahmad: #1]}}
\newcommand{\vinith}[1]{\textcolor{blue}{Vinith: #1}}
\newcommand{\yunxiang}[1]{\textcolor{cyan}{Yunxiang: #1}}
\newcommand{\xiao}[1]{\textcolor{blue}{xiao: #1}}
\definecolor{chocolate(traditional)}{rgb}{0.48, 0.25, 0.0}
\definecolor{darkpastelgreen}{rgb}{0.01, 0.75, 0.24}
\newcommand{\natalie}[1]{\textcolor{darkpastelgreen}{natalie: #1}}
\definecolor{pistachio}{rgb}{0.58, 0.77, 0.45}
\newcommand{\jonas}[1]{\textcolor{violet}{[Jonas: #1]}}
\newcommand{\adelin}[1]{\textcolor{red}{[Adelin: #1]}}
\newcommand{\mohammad}[1]{\textcolor{red}{[Mohammad: #1]}}
\definecolor{amber(sae/ece)}{rgb}{1.0, 0.49, 0.0}
\newcommand\adam[1]{{\textcolor{red}{[Adam: #1]}}}
\newcommand{\sierra}[1]{\textcolor{blue}{[Sierra: #1]}}
\newcommand{\armin}[1]{\textcolor{cyan}{[Armin: #1]}}
\newcommand{\nikita}[1]{\textcolor{cyan}{[Nikita: #1]}}
\newcommand{\franzi}[1]{\textcolor{purple}{[Franzi: #1]}}
\newcommand{\yannis}[1]{\textcolor{blue}{[Yannis: #1]}}
\newcommand{\chris}[1]{}
\newcommand{\franzi}[1]{}
\newcommand{\vinith}[1]{}
\newcommand{\adam}[1]{}
\newcommand{\yunxiang}[1]{}
\newcommand{\natalie}[1]{}
\newcommand{\jonas}[1]{}
\newcommand{\adelin}[1]{}
\newcommand{\mynote}[1]{}
\newcommand{\xiao}[1]{}
\newcommand{\mytodo}[1]{}
\newcommand{\mycomment}[1]{}
\newcommand{\ahmad}[1]{}
\newcommand{\mohammad}[1]{}
\newcommand{\sierra}[1]{}
\newcommand{\armin}[1]{}
\newcommand{\nikita}[1]{}
\def\eqref#1{equation~\ref{#1}}
\def\1{\bm{1}}
\DeclareMathAlphabet{\mathsfit}{\encodingdefault}{\sfdefault}{m}{sl}
\SetMathAlphabet{\mathsfit}{bold}{\encodingdefault}{\sfdefault}{bx}{n}
\def\gD{{\mathcal{D}}}
\def\gE{{\mathcal{E}}}
\def\gI{{\mathcal{I}}}
\def\gR{{\mathcal{R}}}
\def\gS{{\mathcal{S}}}
\def\gV{{\mathcal{V}}}
\newcommand{\ourtitle}{Dataset Inference for Self-Supervised Models}
\title{\ourtitle}
\author{%
  Adam Dziedzic\thanks{Corresponding and leading author: adam.dziedzic@utoronto.ca}\ \ \thanks{Equal contribution.}\ , Haonan Duan$^{\dagger}$, Muhammad Ahmad Kaleem$^{\dagger}$, Nikita Dhawan,\\ 
  \textbf{Jonas Guan, Yannis Cattan, Franziska Boenisch, Nicolas Papernot} \\
  University of Toronto and Vector Institute \\ 
}
\begin{document}

\maketitle
\vspace{-1em}
\begin{abstract}
\vspace{-1em}
Self-supervised models are increasingly prevalent in machine learning (ML) since they reduce the need for expensively labeled data. Because of their versatility in downstream applications, they are increasingly used as a service exposed via public APIs. At the same time, these encoder models are particularly vulnerable to model stealing attacks due to the high dimensionality of vector representations they output. Yet, encoders remain undefended: existing mitigation strategies for stealing attacks focus on supervised learning. We introduce a new dataset inference defense, which uses the private training set of the victim encoder model to attribute its ownership in the event of stealing. The intuition is that the log-likelihood of an encoder's output representations is higher on the victim's training data than on test data if it is stolen from the victim, but not if it is independently trained. We compute this log-likelihood using density estimation models. As part of our evaluation, we also propose measuring the fidelity of stolen encoders and quantifying the effectiveness of the theft detection without involving downstream tasks; instead, we leverage mutual information and distance measurements. Our extensive empirical results in the vision domain demonstrate that dataset inference is a promising direction for defending self-supervised models against model stealing.

\end{abstract}

\section{Introduction}

The self-supervised learning (SSL) paradigm enables pre-training models with unlabeled data to learn generally useful domain knowledge and then transfer the knowledge to solve specific downstream tasks.
The ability to learn from unlabeled data alleviates the high costs of labeling large datasets~\citep{kapoor2007selective}, and the transfer learning setup reduces the computational costs of retraining.
These advantages have made SSL increasingly popular~\citep{ssl-survey2021} in domains like vision~\citep{simclr}, language~\citep{devlin2018bert}, and bioinformatics~\citep{bertbio2019}.

Recently, commercial service providers like Cohere~\citep{Cohere} and OpenAI~\citep{OpenAI} %
began offering paid query access to trained SSL encoders over public APIs.
This exposes the encoders to black-box extraction attacks, \ie model stealing.
In a model stealing attack, an attacker aims to train an approximate copy 
of a victim model by submitting carefully chosen queries and observing the victim's outputs.
The high costs of data collection, preprocessing and model training make encoders valuable targets for stealing.
For example, the training data of CLIP includes 400 million image and text pairs~\citep{clip}, while computation costs of training a large language model can exceed one million USD~\citep{bertTrainCost}.
The threat of model stealing in SSL is real: researchers have demonstrated that encoders can be stolen at a fraction of the victim's training cost~\citep{SSLextraction,ContSteal}.
Yet, most current defenses are designed for supervised models~\citep{powDefense,juuti2019prada,orekondy2019prediction} and cannot be directly applied to encoders~\citep{SSLextraction}.

\begin{figure}[t]
\vskip 0in
\begin{center}
\centerline{\includegraphics[width=0.7\columnwidth]{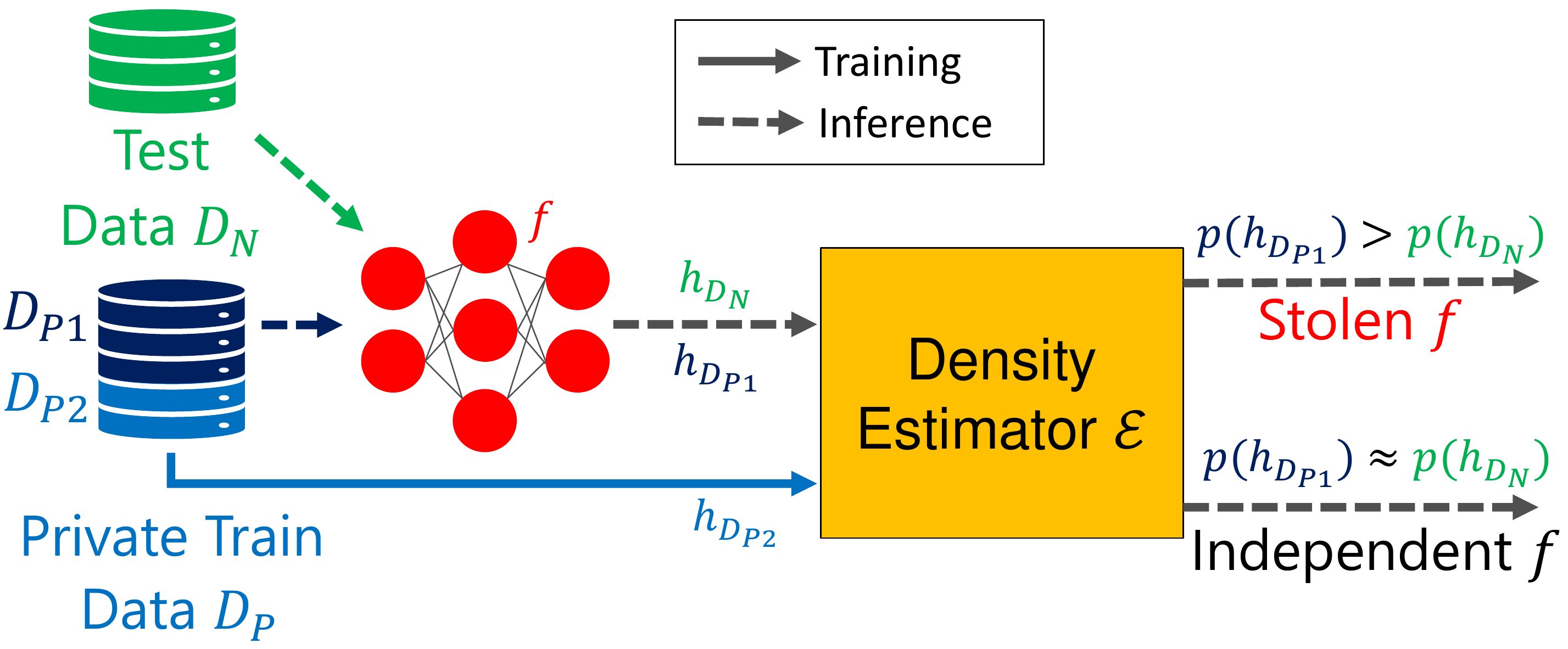}}
\caption{
\label{fig:dataset-inference-only-test}
\textbf{Ownership Resolution for Encoders.} 
\small{
\new{First, an arbitrator trains density estimator $\gE$: divide $D_P$ into non-overlapping partitions $D_{P1}$ and $D_{P2}$, and train density estimator $\gE$ using the representations $h_{D_{P2}}$ of $f$ on $D_{P2}$. Next, the arbitrator performs dataset inference: apply $\gE$ on the representations of $D_{P1}$ and $D_{N}$ of the encoder $f$. For a stolen encoder, the \prob of the representations $h_{D_{P1}}$ is significantly higher than $h_{D_N}$, while, for an independent encoder, the \probs of the representations are not significantly different.}
}
}
\end{center}
\vskip -0.5in
\end{figure}

Dataset inference~\citep{maini2021dataset} is a state-of-the-art defense against model stealing in the supervised learning setting. The defense provides ownership resolution: it enables the model owner to make a strong statistical claim that a given model is a stolen copy of their own model by showing that this model is derivative of their own private training data.
Dataset inference does not require retraining or overfitting the model to any form of explicit watermark~\cite{watermarks2021} and has been shown to resist attacks from adaptive adversaries~\citep{maini2021dataset}.
These properties make the defense particularly attractive for SSL, as large encoders can be expensive to retrain, and performance is paramount because it carries over to all downstream tasks.
However, the original dataset inference algorithm from~\cite{maini2021dataset} cannot be applied to encoders, because it relies on computing distances between data points and decision boundaries.
These decision boundaries do not exist in SSL encoders since they are trained on unlabeled data.

We introduce a new dataset inference method (\Cref{fig:dataset-inference-only-test}) to defend against model stealing for encoders. %
Our algorithm is suitable for the high-dimensional outputs of SSL encoders and does not rely on labeled data or decision boundaries.
Instead, it relies solely on the private training data of the victim encoder as a signature. Moreover, our algorithm retains the advantages of dataset inference for supervised models~\cite{maini2021dataset}, namely, it does not require retraining or overfitting the SSL encoder.

Our key intuition is to identify stolen encoders by characterizing differences between an encoder's representations on its training data vs on unseen test data.
The victim encoder and its derivatives, such as stolen copies, exhibit different behavior on the victim's private training data than on test data; independently trained encoders do not.
These differences exist because encoders overfit to training data~\citep{membershipInferenceSSL2021,liu2021encodermi}.
Although for well-trained encoders the effect is minimal on any given data point, we show that when aggregated over many training points it provides a statistically strong signal.
To identify the differences, we train a Gaussian Mixture Model (GMM), as an efficient general approximation, %
to model the distribution of an encoder's data representations from its training domain.
We then use the GMMs to predict the \prob of the encoders' representations of the victim's training set and a test set; derivatives of the victim encoder will have a higher
\prob on the training set than on the test set.
We perform experiments on five datasets from the vision domain and show that we are able to distinguish between stolen and independent encoders even in cases when adversaries obfuscate the representations from the stolen encoders to hide the theft (\eg by shuffling the elements in the representation vectors or applying to them some form of a linear transformation).%

As part of our evaluation, we also introduce new metrics to measure the fidelity of stolen encoders without involving downstream tasks and to quantify the effectiveness of theft detection. We compute scores directly on the representations using tools from information theory and distance metrics. These methods work well because losses used for stealing encoders directly minimize distances between representations of victim and stolen encoders.
Our mutual information score to assess the quality of the stolen encoders is robust against obfuscations that an adversary might apply to the representations returned by a stolen encoder. Without any obfuscation, our cosine similarity score shows a clearer distinction between stolen and independent encoders. Finally, using these metrics, we observe that the higher the quality of the stolen encoders, the more confident our dataset inference defense becomes.

Our main contributions are as follows:
\begin{itemize}
    \item We propose a new defense against model stealing attacks on encoders, by combining dataset inference with density estimation models for ownership resolution on unlabeled data. 
    \item We are the first to design new metrics that quantify the quality of stolen encoders, which are derived from the mutual information and distances between representations.
    \item We evaluate our defense using five datasets from the computer vision domain\nlp{~and NLP,} and show that our defense can successfully identify stolen encoders with a strong statistical significance.%
\end{itemize}
\section{Related Work}
\label{sec:related-work}

In model stealing, an adversary queries the victim model, obtains outputs, and uses them to recreate a copy of the victim~\citep{pred_apis}.
This is most commonly performed with black-box access, \eg via a public API.
When stealing encoders in SSL, the goal of an adversary is to extract high-quality embeddings either to train a stolen copy that achieves high performance on downstream tasks, or to obtain faithful replicas of the victim's embeddings on the same inputs.
Stolen encoders might be further used for model reselling, backdoor attacks, or membership inference~\citep{SSLguard}.

While most past research on model stealing and defenses focuses on classifiers trained via supervised learning, recent work constructed new attacks that target encoders~\citep{SSLextraction, ContSteal}.
The main differences between the attacks in these settings are that the outputs of encoders leak more information due to their higher dimensionalities~\citep{ContSteal}, and the attacks require different loss functions. %
Inspired by contrastive learning, \textit{Cont-Steal}~\citep{ContSteal} provides a method of stealing encoders using a loss function based on InfoNCE~\citep{simclr}.
SSL extraction~\citep{SSLextraction}---a general Siamese-network-based framework for stealing encoders---leverages losses including mean squared error, InfoNCE, Soft Nearest Neighbor, and Wasserstein distance.
The authors empirically show that an adversary can steal an ImageNet victim encoder in less than a fifth of the queries required for training.%

\new{Proof of Learning (PoL)~\cite{jia2021proof} is a reactive defense that involves the defender claiming ownership of a model by showing incremental updates of the model training. It is a complementary method to dataset inference, which instead identifies a stolen model. PoL could be applied directly to SSL encoders, however, it requires an expensive verification process, where the verifier needs to perform model updates and the prover needs to save intermediate weights of the model, which is more expensive than dataset inference with GMMs.}
Unfortunately, other current defenses against model stealing for supervised learning are inadequate for defending encoders, and adjusting them to the specificities of encoders is non-trivial~\citep{SSLextraction}.
One line of approach is watermarking~\citep{SSLguard, SSLextraction}, where the defender embeds a secret trigger into the victim encoder during training to determine ownership at test time. %
However, watermarking-based defenses have two significant disadvantages.
First, researchers have repeatedly shown that adaptive attackers can remove watermarks without severely affecting model performance~\citep{chen2021refit, entangled_wm, shafieinejad2021robustness, wang2019neural}, \eg through pruning, fine-tuning, rounding or performing backdoor removal~\citep{watermarks2021}.
Second, the watermark must be embedded during training; if a model is already trained, or if a watermark defense needs to be updated, the model must be retrained.
This is not practical for large encoders.

Another state-of-the-art defense against model stealing in the supervised setting is dataset inference~\citep{maini2021dataset} which addresses these disadvantages.
However, the adaptation of dataset inference to encoders is difficult, because (1) the algorithm~\citep{maini2021dataset} relies on decision boundaries, which do not exist for encoders; and (2) encoders are less prone to overfitting, which provides the signal for dataset inference~\citep{SSLextraction, maini2021dataset}.
Therefore, naive approaches like computing the loss of representations or distances between train and test sets are ineffective~\citep{SSLextraction}.
To overcome these issues, we extract more signals from the representations by estimating their densities for train and test sets, as described in Section~\ref{sec:method}.

When it comes to comparing signals within representations, prior work has considered measuring the similarity between different representations. This has led to the proposal of various similarity metrics   
including canonical correlation analysis (CCA)~\citep{morcos2018insights}, centered kernel alignment (CKA)~\citep{similaritynn}, and the orthogonal Procrustes distance~\citep{ding2021grounding}, which use methods from linear regression, principal component analysis (PCA), and singular value decomposition (SVD). However, these metrics are very general and complex. They have also been shown to disagree in some cases~\citep{ding2021grounding}. Since we can only access the final embeddings from encoders, we design metrics more closely related to our setting.

\nlp{Thus far, methods for stealing encoders mainly focused on the computer vision domain.
Previous work in the NLP domain performs stealing against models that are fine-tuned based on a given pre-trained language encoder~\citep{krishna2019thieves, zanella2021grey}.
The setup of these works differs from ours which is concerned with the stealing of the pre-trained encoders themselves.
There exists work on distillation~\cite{hinton2015distilling} (i.e., transferring knowledge from a large teacher model to a smaller student model) of NLP models~\cite{jiao2019tinybert}.
For example, TinyBERT~\cite{jiao2019tinybert} can be distilled from a regular BERT model~\cite{devlin2018bert}. While methods of NLP model distillation can also be applied to steal such models, the aims of distillation and model stealing are different:
while distillation focuses on the compression aspect, an adversary performing model stealing is primarily interested in the accuracy of the extracted model.
Furthermore, distillation usually requires access to the original training dataset (or data from the same distribution), whereas, in model stealing settings, the training data is often unknown to an adversary.
\todo{Jonas: I think this next paragraph can be cut if we need space; our focus should be on dataset inference rather than domains of model stealing. We use the NLP domain solely to strengthen the claims we make with our defense}
}

\section{Defense Method}
\label{sec:method}

Dataset inference serves as a defense against model stealing.
It enables the model owner or a third-party arbitrator to attribute the ownership of a model in the event that it is stolen. %
The idea is to take advantage of the effects of knowledge from the victim's training set %
and to use that as a signature for attributing ownership.
Given a well-trained encoder, the effects are small on any single data point; however, when aggregated over many points in the training set, they collectively provide a strong statistical signal for dataset inference. 
As depicted in Figure~\ref{fig:di-intuition}, for a victim that leverages the private data $D_P$ during training or a stolen copy, we can identify a difference between 
\newcommand\xintu{0.35}
\newcommand\vskipintu{-0.30in}
\begin{wrapfigure}{r}{\xintu\textwidth}
\vskip \vskipintu
\begin{center}
\centerline{\includegraphics[width=\xintu\columnwidth,trim={0 0 0 0}]{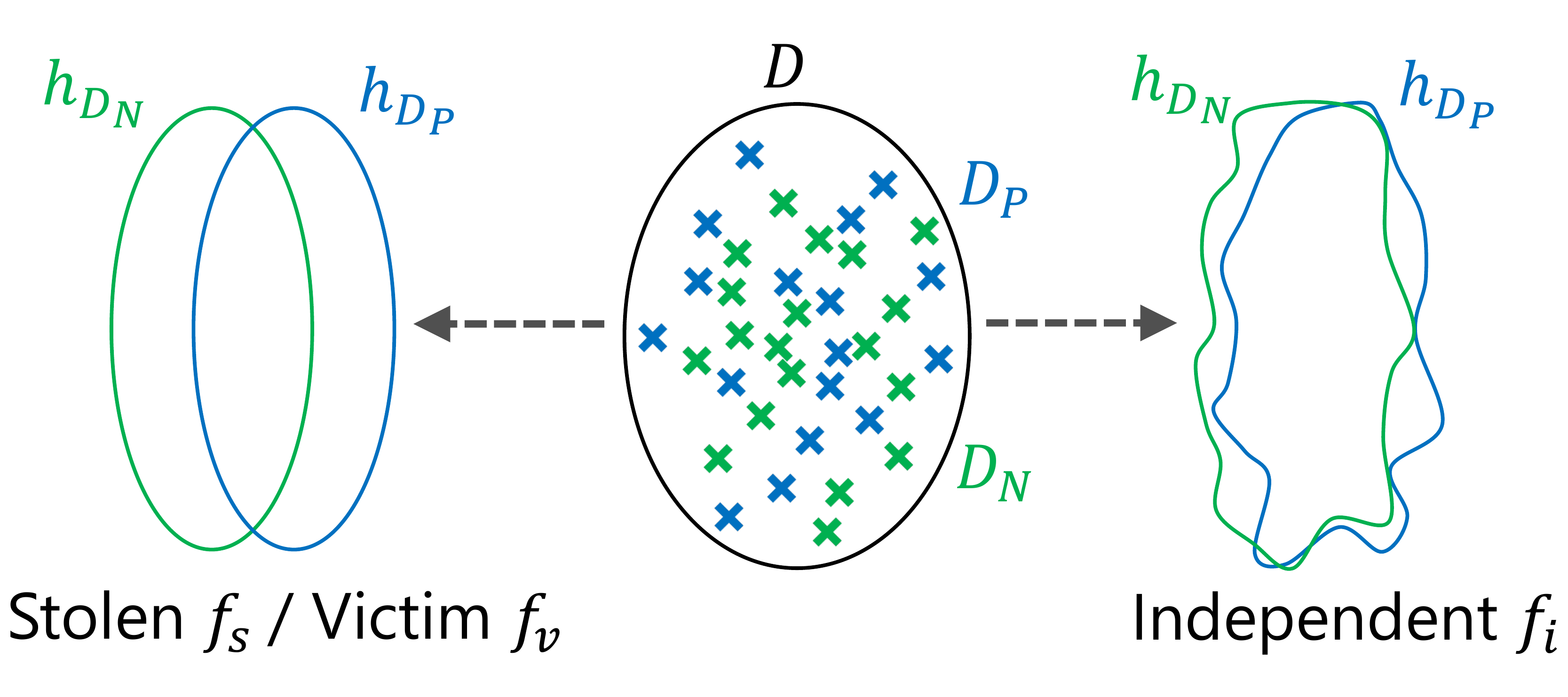}}
\caption{
\label{fig:di-intuition}
\textbf{Dataset Inference - Intuition.} $D_P$ and $D_N$ come from the same distribution. For independent encoders, their representations $h_{D_N}$ and $h_{D_P}$ are i.i.d. while for victim/stolen encoders, $h_{D_N}$ and $h_{D_P}$ induce different distributions.
}
\end{center}
\vskip \vskipintu
\end{wrapfigure}
distributions of the train data's representations~$h_{D_P}$ and the test data's representations~$h_{D_N}$ while the distributions of these representations from an independent encoder cannot be distinguished.
We use this signal to determine whether a model is a derivative of the victim's training data, \ie either directly trained on the data or stolen from the victim.
To capture the signal, we first partition the victim encoder's training data into two subsets and use one to train Gaussian Mixture Models (GMMs) with the aim of modeling the distribution of representations of data from the encoder's domain.
Then, we apply the GMMs to perform dataset inference by measuring the \prob of the encoder's representations of the remaining training data vs some test data (see Figure~\ref{fig:dataset-inference-only-test}).
For the victim encoder or its stolen copies, the \prob of training data representations is significantly higher than the one of test data.
We use this to construct statistical t-tests to determine whether a model is stolen, and present our empirical results in Section~\ref{sec:eval}.

\subsection{Threat Model}
As described in Figure \ref{fig:dataset-inference-overview}, we consider a victim encoder $f_v$ 
trained on a private training dataset $D_P$. An adversary with black-box access to $f_v$ trains a stolen encoder $f_s$ by querying the victim with data points from a dataset $D_S$ to obtain representations. These representations are then used as part of the training objective for the stolen encoder.
During the dataset inference, we assume the presence of a third-party arbitrator, such as law enforcement, with white-box access to the victim's training data, as well as all encoders. 
Additionally, the arbitrator requires the test dataset $D_N$ from the same distribution as $D_P$ to perform dataset inference.
An independent encoder $f_i$ is trained with no access to the victim's private training set $D_P$ and without any queries to the victim's encoder.
It is used as a baseline for the ownership resolution in the dataset inference.

\subsection{Density Estimation of Representations}
\label{sec:density-estimation}
Representations from encoders contain rich features for given inputs. We analyze the inputs that come from the training and test sets through their representations. For the training inputs, we compute their representations and model their densities~\citep{Goodfellow-et-al-2016}. To this end, we leverage GMMs as universal approximators of densities. We give representations a probabilistic interpretation such that they have a smooth enough density which can be approximated by any specific nonzero amount of error using a GMM with enough components. Each component has a separately parameterized mean $\mu$ and covariance $\Sigma$. In some cases, we observe that GMMs can overfit to their training data when no constraints are applied to the covariance matrix, hence we limit the covariance matrix for each component to be diagonal. Moreover, this constraint makes training more computationally efficient since it avoids storing and inverting full high-dimensional covariance matrices. %

\subsection{Data Flow}
\label{sec:data-flow}

The full flow of our dataset inference for encoders consists of the following four main steps (which are also visualized in Figure~\ref{fig:dataset-inference-overview} in Appendix):
\begin{enumerate}[leftmargin=*]
    \item \textbf{Victim Training.} The victim's encoder $f_v$ is trained using the whole private training dataset $D_P$.
    \item \textbf{Encoder Stealing.} To steal the victim encoder $f_v$, an adversary queries $f_v$ with data points from $D_S$ to obtain representations $h_{D_S} \in \gR^n: h_{D_S} = f_v(D_S)$. 
    With these representations, the adversary trains the stolen encoder $f_s$ in a contrastive manner.
    \item \textbf{Training Estimators.} To perform ownership resolution, an arbitrator trains three density estimators $\gE_v$, $\gE_s$ and $\gE_i$ for the victim $f_v$, stolen $f_s$, and independent encoder $f_i$ as follows:
    \begin{enumerate}[label=\alph*)]
    \item $D_P$ \new{(where $D_P$ is not necessarily the whole private training dataset)} is divided into two non-overlapping subsets $D_{P1}$ and $D_{P2}$.
    While $D_{P2}$ serves as the base for training the density estimators, $D_{P1}$ is used to evaluate density estimates of the private training data vs \new{part of} the test data $D_N$. %
    \item For a given encoder $f \in \{f_v, f_s, f_i\}$, the arbitrator generates representations $h \in \gR^n: h_{D_{P2}}=f(D_{P2})$ on dataset $D_{P2}$.
    Training the density estimators on the respective representations yields the final density estimators $\gE_v, \gE_s$, and $\gE_i$.
    \end{enumerate}
    \item \textbf{Estimating Densities.} The arbitrator generates representations of $D_{P1}$ and \new{(a subset of)} $D_{N}$ with each encoder $f \in \{f_v, f_s, f_i\}$.
    Applying the respective density estimator $\gE \in \{\gE_v, \gE_s, \gE_i\}$ on the representations yields the \prob of each data point $x$ in the respective dataset: $ \forall x \in D: p(x) = \gE(f(x))$.
\end{enumerate}

\subsection{Ownership Resolution}

For an encoder $f$, we compute the \prob on $D_{P1}$ and $D_N$ as: $u_P \coloneqq \frac{1}{|D_{P1}|}\sum_{x \in D_{P1}} \gE(f(x))$ and $u_N \coloneqq \frac{1}{|D_{N}|} \sum_{x \in {D_{N}}} \gE(f(x))$. 
The density estimator $\gE$ measures the similarity between the distributions over the victim's representations of the training $D_P$ vs test data $D_N$. 
The intuition behind the setup is that if an encoder was trained on $D_P$, representations of $D_{P1}$ are much more similar to representations of $D_{P2}$, because the whole dataset $D_P$ was used for training the encoder, however, representations of $D_N$ differ from representation of $D_{P2}$ since $D_N$ was not used to train the encoder.
For a victim $f_v$ and a stolen encoder $f_s$, $u_P$ is significantly larger than $u_N$, whereas, for an independent encoder $f_i$, the values do not differ significantly.
Finally, we carry out a hypothesis test with the null hypothesis being: $H_0 \coloneqq u_P \le u_N$. If the null-hypothesis can be rejected (p-value $< 0.05$), \ie when the \prob for the training set $D_{P1}$ is higher than that for the test set $D_N$,
we can conclude that the tested model was stolen. On the other hand, if the null hypothesis cannot be rejected then the test is inconclusive and we cannot determine if a tested encoder was stolen or not.

\section{Encoder Similarity Scores}
\label{sec:theory}

Measuring the quality of stolen encoders allows us to assess attacks and defenses.
In standard supervised learning, the quality of a stolen model is evaluated using two main objectives, namely task accuracy, which is the model's performance on the test set,
and fidelity, which is the agreement in the predictions for a given task between the stolen and the victim model~\citep{fidelity}. One of the approaches to measure the quality of an extracted encoder is to use its outputs to train a downstream task and compute the accuracy of that task or fidelity (with respect to the outputs of the downstream task trained on the victim encoder). \nlp{For example, the NLP models are evaluated on a variety of GLUE downstream tasks~\citep{wang-etal-2018-glue}.} However, a single downstream task cannot adequately reflect the degree of similarity between encoders since it reduces their high dimensional embeddings to single label representations, which are confounded by choices of downstream data and training protocol. 
Instead, we propose two new metrics. Our first metric is an information-theoretic score based on mutual information~\citep{CoverBook,KozLeo87}. 
Our second metric is a cosine similarity score based on the representations returned by different encoders. These metrics correspond to the \textit{fidelity} metric in supervised learning.
The behavior of the two metrics differs in certain cases, for example, when used on obfuscated representations (\eg with shuffled elements) or with independent models, however, we find that the overall trend is similar. Moreover, the mutual information score is based on an approximation while the cosine similarity score is calculated exactly given representation vectors. 
 Often the effectiveness of defenses may be underestimated against low-quality stolen copies that haven't successfully stolen victim behavior. Our metrics help disentangle such effects and enable faithful evaluation of defenses.

\subsection{Mutual Information Score}
\label{sec:mutinf}
Our first approach to assessing the quality of a stolen encoder uses a score based on mutual information. 
We sample $N$ data points from the victim's private training dataset $D_P$ and pass them through the encoders $f_v, f_s$, and $f_i$ to generate the respective representations. Per standard practice, we recenter and normalize the representations~\citep{ding2021grounding}. We denote the entropy by $H$ and compute it according to Algorithm~\ref{alg:entropy-estimator} which takes $f_v$, $D_P$, and $N$ as input. For the joint entropy $H(f_v,f_s|D)$, we generate representations from the two encoders (in this case victim $f_v$ and stolen $f_s$) and concatenate them, which increases the dimensionality of the final representation to $2d$, while other steps remain unchanged. A detailed algorithm for computing the joint entropy can be found in the Appendix as Algorithm~\ref{alg:joint-entropy-estimator}. We compute an \textbf{approximate score} that is based on the definition of mutual information $I(f_v,f_s|D_P)$ between the victim encoder $f_v$ and the stolen copy $f_s$ as well as the analogous mutual information $I(f_v, f_i|D_P)$ between the victim encoder $f_v$ and the independently trained encoder $f_i$. 
We rely on approximations since we measure mutual information using finite data.
Yet, in practice, such approximations have proven useful~\citep{mcallester2020mutualinformation}.
We define our mutual information score as follows:
\begin{equation}\label{eq: mutinf}
    I(f_v,f_s|D_P) = H(f_v|D_P) + H(f_s|D_P) - H(f_v,f_s|D_P)\text{.}
\end{equation}

A higher value of the mutual information $I(f_v,f_s|D_P)$ indicates a higher information leakage incurred by the stolen encoder. 
Expectedly, mutual information is higher between the victim and the stolen encoder than between the victim and independent encoders $I(f_v,f_s|D_P) >> I(f_v,f_i|D_P)$. We can normalize mutual information into a score (between 0 and 1) by setting the lower bound as the mutual information between the victim $f_v$ and a randomly initialized model $f_r$: $I_{min} = I(f_v,f_r|D_P)$ and the upper bound as the mutual information between the victim and itself: $I_{max} = I(f_v,f_v|D_P)$ %
For the current mutual information score $I_c$, the normalized score is defined as $S := \frac{I_c - I_{min}}{I_{max} - I_{min}}$.

\subsection{Cosine Similarity Score} %

The second score we use to assess the quality of a stolen encoder is based on the cosine similarity between its representations and the victim's representations. 
More specifically, we first compute representations for the two encoders on a set of $N$ randomly selected data points from the dataset $D_P$. %
Again as per standard practice~\cite{ding2021grounding}, we recenter and normalize these representations.
For each of the $N$ inputs, we then compute %
the cosine similarity between the corresponding representations from both encoders where the cosine similarity $\mathrm{sim}(a,b) = \frac{a^Tb}{||a||_2||b||_2}$ for representation vectors $a$ and $b$. %

We show %
(in Section~\ref{sec:analysis}) that the loss functions, which are used for stealing encoders, directly maximize the cosine similarity between representations from victim and stolen encoders. %
We thus propose %
to use the cosine similarity score $C$ as a metric, which we define as:  $C = |\mathrm{sim}(a,b)|$ (2).
The score yields values in the range $[0,1]$%
, with a higher score indicating closer representations. To calculate a per-encoder cosine similarity score, we average the cosine similarity scores %
over all inputs. %
We find that the cosine similarity score is well-calibrated %
across encoders. Namely, an independent encoder, expected to have representations unrelated to the victim encoder, has an average cosine similarity concentrated around $0$~\citep{SSLguard}, %
while a stolen encoder exhibits significantly higher scores. %
The cosine similarity score is also easy to compute since it only requires the corresponding representations of the two models and their dot product.

\subsection{Analysis}
\label{sec:analysis}
There are various ways in which an attacker may steal an encoder. To simplify our analysis of the cosine similarity score, we consider the two best-performing loss functions used for stealing~\citep{SSLextraction}: %
the first where the attacker minimizes the non-contrastive MSE (Mean Squared Error) loss between its representations and the victim encoder's representations to train the stolen encoder, %
and the second where the attacker uses a contrastive loss function, such as the InfoNCE loss~\citep{cpc} which is used in SimCLR~\citep{simclr}. %

\textbf{Stealing with MSE loss.} In the case where the MSE loss is used, %
let $x_i$ be a query made by an attacker and let $f_v(x_i)  = h_{v_i}, f_s(x_i) = h_{s_i} \in \mathcal{R}^n$ be the corresponding representations of the victim and stolen encoders, respectively. The MSE loss between these two representations is $\frac{1}{n} \sum_{j=1}^{n} (h_{{v_i}_j} - h_{{s_i}_j})^2 = \frac{1}{n} ||h_{v_i} - h_{s_i}||_2^2$. It follows directly that minimizing the MSE loss also minimizes the $\ell_2$ distance between representations and equivalently maximizes the cosine similarity between representations:
\textbf{Theorem 1} $||a-b||_2 = \sqrt{2(1 - \mathrm{sim} (a,b))}, ||a||_2=||b||_2=1$ 
(see~\ref{sec:l2analysis}).

\textbf{Stealing with a contrastive loss.} When an attacker uses a contrastive loss function %
for stealing%
, minimizing the loss corresponds to maximizing the sum of the cosine similarities between positive pairs, \ie $\sum_{c=1}^{m} (\mathrm{sim} (h_{s_c}, h_{v_c}) / \tau)$.
The InfoNCE loss, or contrastive losses in general, also increase the mutual information score~\citep{mutualmax,cpc}.
We therefore expect that stolen encoders will have larger similarity scores w.r.t. the victim encoder than independent encoders. %
We refer the reader to Appendix~\ref{sec:l2analysis} for a more detailed discussion of the loss functions and their relationship with the similarity scores. %

\section{Empirical Evaluation}
\label{sec:eval}

We evaluate our defense against encoder extraction attacks using five different vision datasets (CIFAR10, CIFAR100~\citep{Krizhevsky09learningmultiple}, SVHN~\citep{svhn}, STL10~\citep{coates2011stl10}, and ImageNet~\citep{deng2009imagenet}). 
Table \ref{tab:data-inference-gmm} shows that our dataset inference method is able to differentiate between the stolen copies of the victim encoder and independently trained encoders by using the victim's private training data as the signature.
We also show that our defense works in the scenario where the adversary modifies the representations to render them inconspicuous, \eg by shuffling the order of elements in the representation vectors. To assess the quality of the stolen encoders and the performance of our defense, we measure the mutual information and cosine similarity scores between encoders %
and present our results in Tables~\ref{tab:similarity_scores} and~\ref{tab:num-queries}. %

\subsection{Training Victim, Stolen and Independent Encoders}

\textbf{Victim.} We use victim encoders trained on the ImageNet,
CIFAR10, and SVHN datatsets. For the ImageNet victim encoder %
, we use a %
model released by the authors of SimSiam~\citep{SimSiam}. To train CIFAR10 and SVHN victim encoders, %
we use an open-source PyTorch implementation of SimCLR \footnote{\url{https://github.com/kuangliu/pytorch-cifar}}. For SVHN, we merge the original training and test splits, and use the randomly-selected 80$\%$ as the training set and the rest 20$\%$ as the test set. This is necessary because the original training and test splits for SVHN are not i.i.d \citep{rabanser2019failing}, which violates the assumption for dataset inference (see Section \ref{sec:protocol}).  The ImageNet victim has an output representation dimension of 2048, while the CIFAR10 and SVHN victim encoders have 512-dimensional representations. 

\textbf{Stolen.} When stealing from the victim encoders, we evaluate different numbers of queries from various datasets, including CIFAR10, SVHN, ImageNet, and STL10. %
Stolen encoders are trained in a similar contrastive way as the victim and use the InfoNCE loss, %
where the positive pairs consist of representations from the victim and stolen encoder for a given input. Algorithm~\ref{alg:stealing} summarizes the stealing approach used by an adversary.

\textbf{Independent.} %
For each victim encoder, we train independent encoders using datasets different from the victim's private training dataset $D_P$. The encoders are trained with the SimCLR approach, similar to the way the victim encoders were trained. In the case where the dataset used to train the independent model had different image dimensions from the victim's training dataset, the dataset was resized to be of the same size.

More details on the training and stealing of encoders can be found in Section~\ref{sec:detailsexp} of the Appendix.

\subsection{Dataset Inference on Encoders}

\textbf{Setup.} We train GMMs with $10$ components for SVHN and CIFAR10, and $50$ components for ImageNet. 
\new{In general, we observe that the larger number of components for GMMs, the better the defense is.}
For ImageNet, we restrict the covariance matrix to be diagonal for efficiency. For CIFAR10 and SVHN, we use the full covariance matrix. For SVHN and CIFAR10, we use $50 \%$ of the training set to train GMMs, and the remaining for evaluation. For ImageNet, we use $100K$ images from the training set to train GMMs, and another $100K$ of the training set as an evaluation set. We normalize representations by $l_2$ norm for training GMM. For ImageNet, we also standardize representations (subtract mean and divide by standard deviation) before normalization. We do not use augmentations in dataset inference. For each setting, the hyperparameters are tuned on the victim model and a randomly-initialized model.%

\textbf{Evaluation of our Defense.} The empirical results in Table \ref{tab:data-inference-gmm} demonstrate that we are able to differentiate between stolen and independent encoders from the difference in log-likelihoods. We observe that the stolen encoders have significantly larger $\Delta \mu$ than the independent encoders. The p-values further show that for stolen encoders the null hypothesis is rejected while for independent encoders, the test is inconclusive. Similar to dataset inference for supervised learning~\citep{maini2021dataset}, the victim model typically has the largest $\Delta \mu$ and the smallest p-values. We also observe that our method is better at detecting encoders that are stolen using queries from the victim's training set.%

\textbf{Number of Stolen Queries.} Table \ref{tab:num-queries} shows that as the attacker steals with more queries, the p-value from our defense becomes lower. This is consistent with the finding in \citep{maini2021dataset} that dataset inference works better with stronger stolen encoders. We also find that our defense is able to detect stolen encoders even if the attacker only steals from a small number of queries. For example, in Table \ref{tab:num-queries}, we are able to claim ownership when only 50K - 100k queries are used for stealing ImageNet victims (around 4$\%$ of its training set).

\textbf{Robustness of Dataset Inference to Obfuscations.} The attacker can obfuscate the stolen encoder representations by, for instance, applying shuffling (changing the order of elements), padding (adding zeros), or linear transformations (\eg scaling or adding a constant). These obfuscations have little impact on the downstream performance~\citep{haochen2021provable} but may pose challenges to the defenses of the victim. %
The results in Table~\ref{tab:data-inference-gmm} show that the p-values for the stolen encoders after attackers' obfuscations remain low, which implies that our method is robust to these types of obfuscations. %

\begin{table}[t]
\caption{
\textbf{
Dataset inference via density estimation of representations.} 
We detect if a given encoder was stolen.
$f_v$ denotes the victim encoder trained on data $D$, $f_s$ is the stolen encoder extracted using queries from a given stealing dataset $D$, and $f_i$ is an independent encoder trained on data $D$ (different than the victim's private training data). Each value is an average of 3 trials. $\Delta \mu$ is the effect size from the statistical t-test. Obfuscations: the representation can be modified by an attacker in the following ways: (1) \textit{Shuffle} the elements in the representation vectors, (2) \textit{Pad} with zeros or add zeros at random positions, and (3) apply a linear \textit{Transform}. The first row below denotes the victim's private data $D_P$.
}
\label{tab:data-inference-gmm}
\scriptsize
    \centering
    \begin{tabular}{ccccccccccc}
    \toprule
         \multicolumn{2}{c}{\textit{\textbf{Victim's private data:}}} & & \multicolumn{2}{c}{\textit{\textbf{CIFAR10}}} && \multicolumn{2}{c}{\textit{\textbf{SVHN}}} ~ & ~ & \multicolumn{2}{c}{\textit{\textbf{ImageNet}}} \\ 
         \cmidrule{1-2} \cmidrule{4-5} \cmidrule{7-8} \cmidrule{10-11} 
         Encoder & Obfuscate & $D$ & p-value & $\Delta \mu$ &
         $D$ & p-value & $\Delta \mu$ &
         $D$ & p-value & $\Delta \mu$ \\
        \midrule
        $f_v$ & N/A & CIFAR10 & \new{5.61e-82} & \new{18.92} & SVHN & 2.75e-125 & 23.88  & ImageNet & 6.23e-14 & 7.09 \\ 
        \cdashlinelr{1-11}
        \multirow{3}{*}{$f_s$} & \multirow{3}{*}{N/A} & SVHN & \new{3.97e-2} & \new{3.04} & SVHN & \new{6.35e-41} & \new{13.36} & SVHN & 3.33e-4 & 4.04 \\
        & & CIFAR10 & \new{8.73e-7} & \new{5.09} & CIFAR10 & 2.38e-4 & 4.61 & CIFAR10 & 1.47e-4  & 6.21 \\
        & & STL10 & \new{1.04e-2} & \new{3.42} & STL10 & 1.23e-5 & 5.22  & \new{STL10} & \new{1.09e-4} & 5.87 \\ 
        & & \new{ImageNet} & \new{6.34e-3}  & \new{3.47} & \new{ImageNet} & \new{9.81e-3} & \new{3.74} & ImageNet & 3.14e-5 & 7.32 \\
        \cdashlinelr{1-11}
        \multirow{3}{*}{$f_s$} & \textit{Shuffle} & CIFAR10 & \new{1.72e-6} & \new{4.98} & CIFAR10 & 7.32e-4  & 4.77 & CIFAR10 & 6.72e-4 & 5.21 \\
        & \textit{Pad} & CIFAR10 & \new{3.44e-6} & \new{4.84} & CIFAR10 & 2.51e-3 & 3.08 & CIFAR10 & 2.31e-3 & 4.23 \\
        & \textit{Transform} & CIFAR10 & \new{6.81e-7} & \new{5.11} & CIFAR10 & 6.45e-3 & 3.32 & CIFAR10 & 8.45e-3 & 3.98 \\
        \cdashlinelr{1-11}
        \multirow{2}{*}{$f_i$} & \multirow{2}{*}{N/A} & CIFAR100 & \new{3.67e-1} & \new{-0.37} & CIFAR100 & 6.21e-1 & 0.52  & CIFAR100 & 7.53e-2 & 1.63 \\
        & & SVHN & \new{2.96e-1} & \new{0.98} & CIFAR10 & 4.82e-1 & 0.56 & SVHN & 5.42e-1 & 0.69 \\
        \bottomrule
    \end{tabular}
    \vskip -0.1in
\end{table}

\subsection{Measuring Quality of Stolen Encoders} \label{sec:quality}

\newcommand\xdensity{0.4} %
\newcommand\vskipdensity{-0.2in}
\begin{wrapfigure}{r}{\xdensity\textwidth}
\vskip \vskipdensity
  \begin{center}
    \includegraphics[width=\xdensity\textwidth]{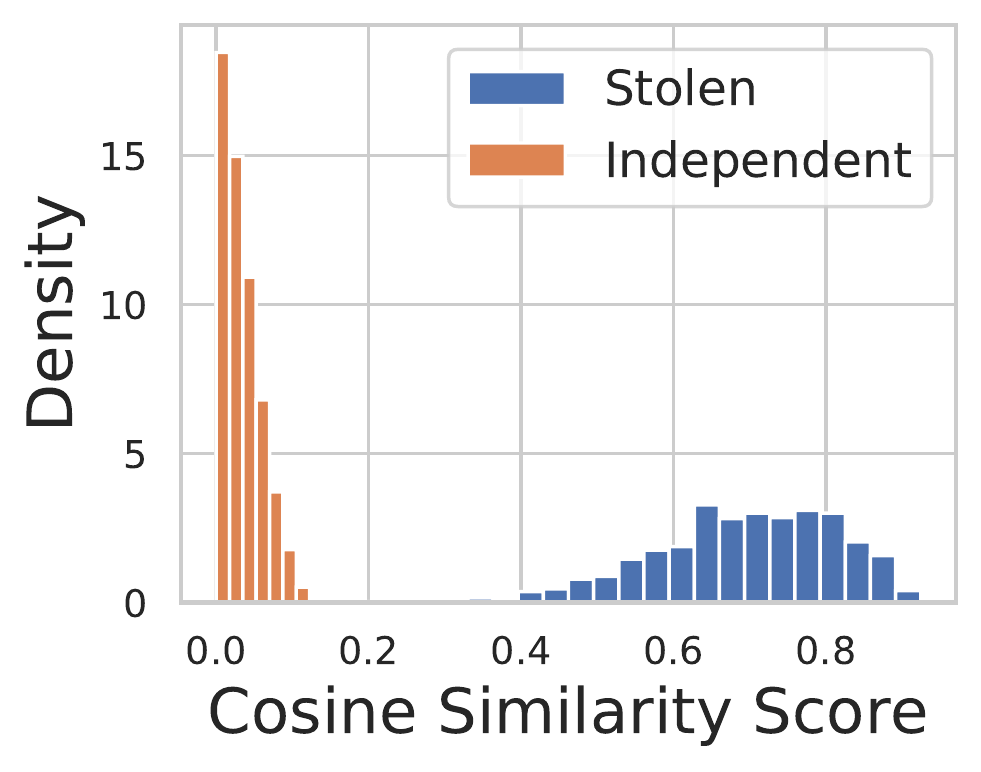}
  \end{center}
  \caption{
  Distribution of \textbf{cosine similarity scores}. 
  \label{fig:cosinedist}
  }
\vskip \vskipdensity
\end{wrapfigure}

\textbf{Setup.} To measure the quality of stolen encoders, we select a random subset of $N=20K$ unaugmented images from the private training dataset $D_P$ and compute their representations from stolen and victim encoders. We then centralize (subtract the mean for each dimension) and normalize the representations (divide by the $\ell_2$ norm). For the mutual information score, we first estimate the entropies $H(f_v), H(f_s), H(f_v,f_s)$, which are then added and normalized as in Section \ref{sec:mutinf}. The score is capped to be in the range $[0, 1]$.
To compute the cosine similarity score, %
we find the absolute value of the dot product of corresponding representations for the two encoders 
(Equation 2).
These dot products are then averaged over all representations. %

\textbf{Evaluation of Metrics.} To evaluate the mutual information and the cosine similarity scores, we conduct two sets of experiments to verify if: (1) the scores are higher for stolen than independent encoders, and (2) the scores increase as more queries are used to steal encoders, which suggests a higher quality of the stolen copies~\citep{SSLextraction}. %
In Table~\ref{tab:similarity_scores}, we observe that both our scores assign higher values to the stolen encoders than the independent encoders. Table~\ref{tab:num-queries} shows that our mutual information and cosine similarity scores generally increase while the p-values from our dataset inference decrease with respect to the number of queries used to steal an encoder. This implies that the performance of our defense is consistent with the similarity metrics and becomes more effective as the quality of the stolen encoder improves.
We also plot a histogram of the cosine similarity scores for the stolen and independent encoders in Figure~\ref{fig:cosinedist} for an SVHN victim encoder, a stolen encoder from it (using CIFAR10 training data for queries), and an independent encoder (trained on CIFAR100). 
There is a pronounced difference between the two distributions with the cosine similarity scores for the independent encoder being close to $0$ and the scores for the stolen encoder being much higher than $0$. 

\textbf{Robustness of Metrics to Obfuscations.} We also consider the effect of obfuscations on these metrics. Without any obfuscation of the representations from stolen encoders, the cosine similarity score shows a clearer distinction between stolen and independent encoders than the mutual information score: in Table~\ref{tab:similarity_scores}, the cosine similarity scores for all independent encoders are close to zero, but the mutual information scores can be quite high (such as $0.9$ for the independent encoders of SVHN, which is likely because of the mutual information score being based on an approximation). 
However, the mutual information score is robust to the obfuscations of the attackers while cosine similarity is not: in Table~\ref{tab:similarity_scores}, the cosine similarity score for the stolen encoders after shuffling and padding drops close to zero. Mutual information, as a more general metric based on the information measurement instead of the brittle structure of the representation vectors, performs better and is oblivious to the obfuscations that attackers might introduce.

\begin{table}[t]
\caption{
\label{tab:similarity_scores}
\textbf{Encoder similarity scores.} %
We compare encoders via the encoder quality metrics using the same setting as in Table~\ref{tab:data-inference-gmm}. We compute the score $S(\cdot,f_v)$ based on the mutual information between a given encoder (in a row) and the victim encoder $f_v$. Analogously, we compute the cosine similarity score $C(\cdot, f_v)$. 
}
\tiny%
    \centering
     \begin{tabular}{p{2em}cx{4.5em}p{3em}cx{4.5em}ccx{4.5em}cc}
    \toprule
        \multicolumn{2}{c}{\textit{\textbf{Victim's private data:}}} & & \multicolumn{2}{c}{\textit{\textbf{CIFAR10}}} && \multicolumn{2}{c}{\textit{\textbf{SVHN}}} ~ & ~ & \multicolumn{2}{c}{\textit{\textbf{ImageNet}}} \\ 
         \cmidrule{1-2} \cmidrule{4-5} \cmidrule{7-8} \cmidrule{10-11} 
         Encoder & Obfuscate & $D$ & $S(\cdot,f_v)$ & $C(\cdot,f_v)$ &
         $D$ & $S(\cdot,f_v)$ & $C(\cdot,f_v)$ &
         $D$ & $S(\cdot,f_v)$ & $C(\cdot,f_v)$ \\
        \midrule
        $f_v$ & N/A & CIFAR10 & 1.0 & 1.0 & SVHN & 1.0 & 1.0  & ImageNet &  1.0 & 1.0 \\ 
        \cdashlinelr{1-11}
        \multirow{3}{*}{$f_s$} & N/A & SVHN & 0.73 & 0.504 & SVHN & 0.96 & 0.91 & SVHN & 0.86 & 0.39  \\
        & N/A & CIFAR10 & 0.84 & 0.95 & CIFAR10 & 0.94 & 0.69 & CIFAR10 & 0.88 & 0.43 \\
        &N/A &  STL10 & 0.89 & 0.92 & STL10 & 0.95 & 0.89 & ImageNet & 0.96 & 0.78 \\
        \cdashlinelr{1-11}
        \multirow{3}{*}{$f_s$} & \textit{Shuffle} & SVHN & 0.74 & 0.002 & CIFAR10 & 0.94 & 0.003 & SVHN & 0.86 & 0.005 \\
        & \textit{Pad} & SVHN & 0.74 & 0.007 & CIFAR10 & 0.93 & 0.013 & SVHN & 0.85 & 0.003 \\
        & \textit{Transform} & SVHN & 0.75 & 0.504 & CIFAR10 & 0.93 & 0.69 & SVHN & 0.86 & 0.39 \\
        \cdashlinelr{1-11}
        \multirow{2}{*}{$f_i$} & N/A & CIFAR100 & 0.63 & 0.0007 & CIFAR100 & 0.90 & 0.007 & CIFAR100 &  0.81 & 0.0018 \\
        & N/A & SVHN & 0.12 & 0.0001 & CIFAR10 & 0.90 & 0.009 & SVHN & 0.75 & 0.002 \\
        \bottomrule
    \end{tabular}
    \vskip -0.1in
\end{table}

\begin{table}[t]
\caption{
\textbf{Encoder similarity scores and p-values from dataset inference vs the number of queries}. The quality of the stolen encoders increases with more stealing queries, which is reflected by the rise in the mutual information and cosine similarity scores as well as the better performance of our defense as indicated by the decreasing p-values. $D_P$ is the private dataset used to train the victim and $D_S$ is the dataset used for stealing.
}
\label{tab:num-queries}
\tiny %
    \centering
      \begin{tabular}{cccccccccccc}
    \toprule
         $D_P$ & $D_S$ & \textbf{Score} &\multicolumn{9}{c}{\textbf{Number of Queries}} \\
         & & & 5K & 10K & \new{20K} & \new{30K} & \new{40K} & 50K & 100K & 200K & 250K \\
        \cmidrule{4-12} 
        \multirow{3}{*}{ImageNet} & \multirow{3}{*}{SVHN} & $S(\cdot,f_v)$ & 0.62 & 0.79 & \new{0.79} & \new{0.81} & \new{0.82} & 0.84 & 0.85 & 0.85 & 0.86 \\
        & & $C(\cdot,f_v)$ & 0.25 & 0.32 & \new{0.33} & \new{0.36} & \new{0.35} & 0.38 & 0.38 & 0.40 & 0.39 \\
        & & p-values & 1.23e-1 & 7.91e-2 & \new{6.53e-2} & \new{8.98e-2} & \new{4.52e-2} & 1.10e-2 &2.11e-3 & 1.11e-3 & 3.33e-4 \\
        \cdashlinelr{1-12}
         & & & 500 & 5K & 7K & 8K & 9K & 10K & 30K & 40K & 50K \\
        \cmidrule{4-12} 
        \multirow{3}{*}{ImageNet} & \multirow{3}{*}{CIFAR10} & $S(\cdot,f_v)$ & 0.55 & 0.60 & \new{0.62} & \new{0.75} & \new{0.58} & 0.64 & 0.87 & 0.82 & 0.88 \\
        & & $C(\cdot,f_v)$ & 0.21 & 0.28 & \new{0.31} & \new{0.29} & \new{0.36} & 0.32 & 0.40 & 0.41 & 0.43 \\
        & & p-values & 8.88e-2 & 7.12e-2 & 8.23e-1 & 4.14e-1 & 3.41e-3 & 8.51e-21 & 9.23e-2 & 7.32e-2 & 1.47e-4 \\
        \cdashlinelr{1-12}
         & & & 500 & 5K & 7K & 8K & 9K & 10K & 30K & 50K & 100K \\
        \cmidrule{4-12} 
        \multirow{3}{*}{ImageNet} & \multirow{3}{*}{STL10} & $S(\cdot,f_v)$ & 0.76 & 0.75 & \new{0.72} & \new{0.81} & \new{0.84} & 0.81 & 0.89 & 0.88 & 0.92 \\
        & & $C(\cdot,f_v)$ & 0.28 & 0.29 & 0.36 & 0.38 & 0.37 & 0.43 & 0.44 & 0.52 & 0.58 \\
        & & p-values & 9.63e-1 & 8.21e-1 & 7.32e-1 & 5.44e-1 & 1.21e-1 & 5.98e-2 & 8.11e-2 & 6.28e-2 & 1.09e-4 \\
        \cdashlinelr{1-12}
         & & & 5K & 10K & \new{20K} & \new{30K} & \new{40K} & 50K & 100K & 200K & 250K \\
        \cmidrule{4-12} 
        \multirow{3}{*}{ImageNet} & \multirow{3}{*}{ImageNet} & $S(\cdot,f_v)$ & 0.61 & 0.75 & 0.73 & 0.76 & 0.81 & 0.91 & 0.90 & 0.95 & 0.96 \\
        & & $C(\cdot,f_v)$ & 0.29  & 0.48 & 0.49 & 0.51 & 0.46 & 0.38 & 0.52  & 0.76   & 0.78  \\
        & & p-values & 9.88e-1 & 3.21e-1 & \new{5.32e-1} & \new{1.08e-1} & \new{3.61e-3} & 3.97e-4 & 5.34e-4 & 8.72e-4 & 3.14e-5 \\       
   \bottomrule
    \end{tabular}
     \vskip -0.1in
\end{table}

\nlp{
\subsection{NLP Models}
In our experiment with language models, we focused on models using the TinyBERT~\cite{jiao2019tinybert} architecture.
\textbf{Training.} We trained our victim model $\gV$ from scratch using the English Wikipedia~\cite{wikidump} dataset and a pre-trained BERT~\cite{devlin2018bert} tokenizer. \yannis{We use a masking probability of 0.15.}
\textbf{Stealing.} Model stealing was performed using the English subset of the OSCAR dataset~\cite{oscardataset}.
By obtaining representations from~$\gV$, the stolen model was trained using the MSE-loss.
\textbf{Validation.}
We additionally trained an independent model~$\gI$ on a BookCorpus dataset~\cite{Zhu_2015_ICCV}.
We tokenize the training data of both~$\gV$ and~$\gI$ using a pre-trained BERT-tokenizer.
\textbf{Evaluation.} We evaluate all models, $\gV$,~$\gI$ and~$\gS$ on the GLUE benchmark~\cite{wang2019glue}.
\textbf{Dataset Inference.}
We then run the dataset inference between the victim and the stolen model and check if the p-value is $ \leq 0.01$.
A a checkup, we also run dataset inference between the victim and the independent random model and check if the p-value is $> 0.01$.
}

\subsection{Limitations}
\label{sec:limitations}
\new{
If the t-test run as part of dataset inference is inconclusive for an extracted encoder, we cannot state whether the encoder was stolen. Similarly, for an independent encoder, there is the possibility of it being incorrectly classified as stolen. Previous work~\cite{Kotar_2021_ICCV, concept_generalization_2021_ICCV} has shown that self-supervised encoders trained using heavy augmentations and contrastive learning generalize better than their supervised counterparts, which makes it harder for the dataset inference to differentiate between train and test representations in SSL than in the SL setting~\cite{SSLextraction}.
The loss values of projected individual representations are insufficient for dataset inference~\cite{SSLextraction}. We build on top of this observation to enable dataset inference for encoders and use GMMs to distinguish between
train and test representations.
}
\section{Conclusions}

New public APIs expose self-supervised encoder models which return high-dimensional embeddings for provided inputs. Adversaries can use these embeddings to steal the encoders. We present a novel method based on dataset inference for defending against such stealing attacks along with metrics to assess the quality of the stolen encoders and to quantify the effectiveness of our defense. %
We observe that knowledge contained in the private training set is transferred from the victim encoder to its stolen copy. Thus, the private data acts as a signature of the victim encoder. 
By leveraging density estimation on the respective encoders' representations, we obtain a signal allowing us to differentiate between the encoder's training and test data. 
This difference is detectable in both the victim encoder and its stolen copy but not in independent encoders which are legitimately trained on different data than the victim's private training data.
Thus, we are able to flag the stolen copy of the victim encoder while not accusing creators of legitimately trained encoders of theft. We show the high effectiveness of our defense on vision encoders. 
Future work may explore additional applications of our proposed defense and metrics beyond model stealing and ownership verification, as well as their use in other domains such as natural language processing (NLP).
In particular, our method may help enforce the ethical usage of sensitive online data, such as images on social media, in accordance with privacy regulations by auditing if a given provider's encoder contains knowledge of these sensitive data.

\section*{Acknowledgments}
We would like to acknowledge our sponsors, who support our research with financial and in-kind contributions: CIFAR through the Canada CIFAR AI Chair program, DARPA through the GARD program, Intel, Meta, NFRF through an Exploration grant, and NSERC through the Discovery Grant and COHESA Strategic Alliance. Resources used in preparing this research were provided, in part, by the Province of Ontario, the Government of Canada through CIFAR, and companies sponsoring the Vector Institute. We would like to thank members of the CleverHans Lab for their feedback.

\bibliographystyle{plainnat}
\bibliography{main}

\newpage
\section*{Checklist}

\begin{enumerate}

\item For all authors...
\begin{enumerate}
  \item Do the main claims made in the abstract and introduction accurately reflect the paper's contributions and scope?
    \answerYes{}
  \item Did you describe the limitations of your work?
    \answerYes{See Appendix Section~\ref{sec:limitations}.}
  \item Did you discuss any potential negative societal impacts of your work?
    \answerYes{See Appendix Section~\ref{sec:negative-impact}.}
  \item Have you read the ethics review guidelines and ensured that your paper conforms to them?
    \answerYes{}
\end{enumerate}

\item If you are including theoretical results...
\begin{enumerate}
  \item Did you state the full set of assumptions of all theoretical results?
    \answerYes{}
	\item Did you include complete proofs of all theoretical results?
    \answerYes{}
\end{enumerate}

\item If you ran experiments...
\begin{enumerate}
  \item Did you include the code, data, and instructions needed to reproduce the main experimental results (either in the supplemental material or as a URL)?
    \answerYes{Provided with the supplementary materials.}
  \item Did you specify all the training details (e.g., data splits, hyperparameters, how they were chosen)?
    \answerYes{See Appendix~\ref{sec:additional-experiments}.}
	\item Did you report error bars (e.g., with respect to the random seed after running experiments multiple times)?
    \answerYes{}
	\item Did you include the total amount of compute and the type of resources used (e.g., type of GPUs, internal cluster, or cloud provider)?
    \answerYes{See Appendix~\ref{sec:additional-experiments}.}
\end{enumerate}

\item If you are using existing assets (e.g., code, data, models) or curating/releasing new assets...
\begin{enumerate}
  \item If your work uses existing assets, did you cite the creators?
    \answerYes{See Section \ref{sec:related-work}}
  \item Did you mention the license of the assets?
    \answerNo{All licences MIT.}
  \item Did you include any new assets either in the supplemental material or as a URL?
    \answerYes{}
  \item Did you discuss whether and how consent was obtained from people whose data you're using/curating?
    \answerNo{Used datasets are either synthetic or popular standard datasets.}
  \item Did you discuss whether the data you are using/curating contains personally identifiable information or offensive content?
    \answerNo{Used datasets are either synthetic or popular standard datasets.}
\end{enumerate}

\item If you used crowdsourcing or conducted research with human subjects...
\begin{enumerate}
  \item Did you include the full text of instructions given to participants and screenshots, if applicable?
    \answerNA{}
  \item Did you describe any potential participant risks, with links to Institutional Review Board (IRB) approvals, if applicable?
    \answerNA{}
  \item Did you include the estimated hourly wage paid to participants and the total amount spent on participant compensation?
    \answerNA{}
\end{enumerate}

\end{enumerate}

\appendix 
\newpage

\onecolumn

\title{\ourtitle (Supplement)}
\label{sec:appendix}

\section{Negative Societal Impacts}
\label{sec:negative-impact}

Our work aims to defend self-supervised models against model stealing attacks. Since we are directly defending models and aim to provide attribution, any negative societal impacts of our work are minimal. 
One potentially negative impact could be if the t-test result is inconclusive about a stolen model being stolen or if it incorrectly identifies an independent model as stolen. 
However, as shown from our results, we are consistently able to differentiate correctly between stolen and independent models and can use our metrics to further reinforce the results from dataset inference.
In terms of data and model access, we assume that the victim or a trusted third party, such as law enforcement, is responsible for running the dataset inference so that there are no privacy-related concerns.

\section{Protocol for Dataset Inference}
\label{sec:protocol}
We design the following protocol for Dataset Inference:
\begin{enumerate}
    \item Select a third-trusted party as an arbitrator for the ownership resolution.
    \item Arbitrator ensures that the train and validation sets are IID (from the same distribution) by combining the train and test sets followed by a random split into the training set for the defended model and the private validation set used for dataset inference. 
    \item Specification of the number of data points used for dataset inference: use all the data points from the validation set and the equivalent number of data points from the train set.
\end{enumerate}

\section{More Related Work}

\subsection{Membership Inference}
EncoderMI~\citep{liu2021encodermi} leverages the finding that an image encoder overfits to its pre-training dataset and returns more similar embeddings for pairs of augmented pre-training data points than for points not in the pre-training set. EncoderMI assumes some data points to be assessed as members and a shadow dataset as inputs. The first step is to create $n$ augmentations of a point from the shadow dataset and compute for it $n \choose 2$ (pair-wise) similarity scores using the embeddings extracted from a shadow encoder, which is trained on the shadow dataset. The scores form the membership feature vector for a given shadow data point. After labeling each such point as member or non-member, the membership features and the corresponding labels are used to train an inference classifier to infer if a given data point was a member or non-member of a target encoder. The early stopping is investigated as a mitigation defense against EncoderMI. The defense can reduce the effectiveness of the attack, however, at the cost of the lower performance of the defended encoder on downstream tasks.

\subsection{Non-Transferable Learning}
Non-Transferable Learning (NTL)~\citep{wang2022nontransferable} achieves ownership resolution and usage authorization by discouraging the model to generalize to data domains outside of its training data. 
To perform ownership resolution, NTL incentives the model to generalize poorly to a specific target domain, making it more likely to mis-classify on data from that domain. 
The authors argue that the model's unexpectedly poor behavior on the target domain can then be used like a watermark to claim ownership, and unlike many previous watermarking defenses, is harder to remove because the misclassification behavior is embedded in the model.
To control usage authorization, NTL intentionally degrades the model performance on the target domain.
The authors argue that this prevents users from applying the model on unauthorized data. 

\subsection{Metrics to Compare Encoders}
 
The desired metric for comparison between two representations should evaluate whether two representations are essentially similar or importantly different~\citep{ding2021grounding}.

\section{Additional Details on Experiments}
\label{sec:additional-experiments}

\subsection{Datasets Used}

\textbf{CIFAR10}~\citep{Krizhevsky09learningmultiple}: The CIFAR10 dataset consists of 32x32 colored images with 10 classes. There are 50000 training images and 10000 test images.

\textbf{CIFAR100}~\citep{Krizhevsky09learningmultiple}: The CIFAR100 dataset consists of 32x32 coloured images with 100 classes. There are 50000 training images and 10000 test images.

\textbf{SVHN}~\citep{svhn}: The SVHN dataset contains 32x32 coloured images with 10 classes. There are roughly 73000 training images, 26000 test images and 530000 "extra" images. 

\textbf{ImageNet}\citep{deng2009imagenet}: Larger sized coloured images with 1000 classes. As is commonly done, we resize all images to be of size 224x224. There are approximately 1 million training images and 50000 test images. 

\textbf{STL10}~\citep{coates2011stl10}: The STL10 dataset contains 96x96 coloured images with 10 classes. There are 5000 training images, 8000 test images, and 100000 unlabeled images. 

\subsection{Encoder similarity scores and p-values}

We present additional results for the encoder similarity scores and p-values from dataset inference vs the number of queries in Table~\ref{tab:num-queries-all}.

\subsection{Details on Experimental Setup}\label{sec:detailsexp}

We show a summary of the encoders used in our experiments in Table~\ref{tab:model-details}. 

\begin{table}[]
\caption{\textbf{Summary of our encoders.} We show all possible combinations of victim, stolen, and independent encoders along with their architectures. *, ** denotes that the models listed are equivalent. 
}
\label{tab:model-details}
\begin{center}
\begin{scriptsize}
\begin{sc}
\begin{tabular}{ccccccc}
\toprule
\multicolumn{2}{c}{\textbf{Victim Encoder}}  & \multicolumn{3}{c}{\textbf{Stolen Encoder}} & \multicolumn{2}{c}{\textbf{Independent Encoder}} \\
$D$ & Architecture & $D$ & Architecture &  \# Queries &$D$ & Architecture \\
\hline
CIFAR10 & ResNet34 & CIFAR10 & ResNet34 & 500 - 50K & CIFAR100* & ResNet18 \\
CIFAR10 & ResNet34 & SVHN & ResNet34 & 500 - 50K & CIFAR100* & ResNet18 \\
CIFAR10 & ResNet34 & STL10 & ResNet18 & 500 - 50K & SVHN & ResNet34 \\
 \cdashlinelr{1-7}
SVHN & ResNet34 & CIFAR10 & ResNet34 & 500 - 50K & CIFAR100* & ResNet18 \\
SVHN & ResNet34 & SVHN & ResNet34 & 500 - 50K & CIFAR100* & ResNet18 \\
SVHN & ResNet34 & STL10 & ResNet34 & 500 - 50K & CIFAR10 & ResNet34 \\
 \cdashlinelr{1-7}
ImageNet & ResNet50 & CIFAR10 & ResNet50 & 5K - 50K & CIFAR100** & ResNet50 \\
ImageNet & ResNet50 & SVHN & ResNet50 & 5K - 250K & CIFAR100** & ResNet50 \\
ImageNet & ResNet50 & ImageNet & ResNet50 & 5K - 250K & SVHN & ResNet50 \\
\toprule
\end{tabular}
\end{sc}
\end{scriptsize}
\end{center}
\vspace{-0.0cm}
\end{table}

The ResNet18/ResNet34 architectures used for the CIFAR10 and SVHN victim encoders and the related stolen and independent encoders used a 3x3 Conv layer of stride 1 instead of the default 7x7 Conv layer and did not use a max pooling layer. When stealing from the ImageNet victim encoder, the images used for queries were resized to be of size 224x224. Similarly when training independent ResNet50 encoders to be used with the ImageNet victim encoder, the images in the respective datasets were resized to a size of 224x224. 
 
For the results in Tables~\ref{tab:data-inference-gmm} and~\ref{tab:similarity_scores}, encoders with the highest numbers of queries for each case were used. In other words, for encoders stolen from the CIFAR10 or SVHN victim encoders, 50K queries were used while for encoders stolen from the ImageNet victim encoder, 250K queries were used. Note that since CIFAR10 does not have 250K different examples, 60K queries from the aggregated training and test set were used when stealing with the CIFAR10 dataset. Encoders with a smaller number of queries were also stolen with the numbers of queries ranging from 500 - 50K for the CIFAR10 and SVHN victim encoders, and queries ranging from 5K - 250K for the ImageNet victim encoder.

We train the SVHN and CIFAR10 victim models for 200 epochs. To train the independent and stolen encoders, we used 100 epochs. When stealing from the ImageNet victim encoder, the SGD/LARS optimizer was used while for other models, the Adam optimizer was used. The initial learning rate was kept constant in all cases and was adjusted with the Cosine Annealing scheduler. A batch size of 256 or 512 was used for training the models. The temperature parameters used varied between 0.1, 0.15, 0.2, and 0.25 with a larger temperature used for models with a higher number of queries. For all queries under 50K, the temperature was set to be 0.1. 

\begin{algorithm}[t]
  \caption{Stealing an Encoder~\citep{SSLextraction}.}\label{alg:stealing}
  
  \algorithmicrequire{Querying Dataset $\mathcal{D}$, access to a victim encoder $f_v(w; \theta_v)$.}
  
  \algorithmicensure{Stolen representation model $f_s(w; \theta_a)$} 

   \begin{algorithmic}[1]
   \STATE Initialize $f_s$ with a similar architecture as $f_v$.
   \FOR{sampled queries $\{x_k\}_{k=1}^{N} \in \mathcal{D}$}
    \STATE Query victim encoder to obtain representations:\newline $y_{v} = f_v(x_k)$ %
    \STATE Generate representations from stolen encoder: \newline $y_{s} =  f_s(x_k) $  %
    \STATE Compute loss $\mathcal{L} \left\{ y_v, y_s \right\}$. %
    \STATE Update stolen encoder parameters $\theta_s \coloneqq \theta_s - \eta \nabla_{\theta_s} \mathcal{L}$.
    \ENDFOR
    
  \end{algorithmic}
\end{algorithm}

We ran all experiments on machines equipped with an Intel® Xeon® Silver 4210 processor, 128 GB of RAM, and four
NVIDIA GeForce RTX 2080 graphics cards, running Ubuntu 18.04.

\section{Entropy Estimation}

We present the entropy estimator in Algorithm~\ref{alg:entropy-estimator} and the joint entropy estimator in Algorithm~\ref{alg:joint-entropy-estimator}. 

\begin{algorithm}[t]
  \caption{Kozachenko-Leonenko Entropy Estimator for Encoders.}\label{alg:entropy-estimator}
  
  \algorithmicrequire{Dataset $\gD$, number of data points $N \ge 2$ to be sampled from $\gD$, access to an encoder $f(\cdot)$, the Euler-Mascheroni constant $\gamma \approx 0.577$, and the gamma function $\Gamma$.}
  
  \algorithmicensure{Entropy Estimation $H$.} 

   \begin{algorithmic}[1]
   \STATE Sample $N$ data points from $\gD$: $x_1, ..., x_{N}$.
   \FOR{each sampled data point $\{x_k\}_{k=1}^{N} \in \gD$}
    \STATE Sample an augmentation $t$.
    \STATE Generate view $w_k=t(x_k)$.
    \STATE Query the encoder $f$ to generate the representation: $y_{k} = f(w_{k})$.
   \ENDFOR
   \FOR{each representation $\{y_k\}_{k=1}^{N} \in \mathbf{R}^d$}
    \STATE Find nearest neighbor distance: $R_k = ||y_k - y_i||_2$, where $i\ne k$.  
    \STATE Compute transformation: $z_k = (N-1)\cdot (R_k)^d$.
   \ENDFOR
   \STATE Compute the volume of the unit ball in $\mathbf{R}^d$: $B_d = \frac{\pi^{d/2}}{\Gamma(1 + d/2)}$.
   \STATE Compute Entropy: $H=\frac{1}{N} \sum_{i=1}^{N} \log z_i + \log B_d + \gamma$
  \end{algorithmic}
\end{algorithm}

\begin{algorithm}[t]
  \caption{Kozachenko-Leonenko \textbf{Joint} Entropy Estimator for Encoders.}\label{alg:joint-entropy-estimator}
  
  \algorithmicrequire{Dataset $\gD$, number of data points $N \ge 2$ to be sampled from $\gD$, an access to an encoder $f(\cdot)$, an access to an encoder $g(\cdot)$, the Euler-Mascheroni constant $\gamma \approx 0.577$, and the gamma function $\Gamma$.}
  
  \algorithmicensure{Joint Entropy Estimation $H$.} 

   \begin{algorithmic}[1]
   \STATE Sample $N$ data points from $\gD$: $x_1, ..., x_{N}$.
   \FOR{each sampled data point $\{x_k\}_{k=1}^{N} \in \gD$}
    \STATE Sample an augmentation $t$.
    \STATE Generate view $w_k=t(x_k)$.
    \STATE Query the encoder $f$ to generate the representation: $\hat{y}_{k} = f(w_{k})$.
    \STATE Query the encoder $g$ to generate the representation: $\bar{y}_{k} = g(w_{k})$.
    \STATE Concatenate the representations: $y_k = \hat{y}_{k} \mathbin\Vert \bar{y}_{k}$.
   \ENDFOR
   \FOR{each representation $\{y_k\}_{k=1}^{N} \in \mathbf{R}^d$}
    \STATE Find nearest neighbor distance: $R_k = ||y_k - y_i||_2$, where $i\ne k$.  
    \STATE Compute transformation: $z_k = (N-1)\cdot (R_k)^{2d}$.
   \ENDFOR
   \STATE Compute the volume of the unit ball in $\mathbf{R}^d$: $B_d = \frac{\pi^{d/2}}{\Gamma(1 + d/2)}$.
   \STATE Compute Entropy: $H=\frac{1}{N} \sum_{i=1}^{N} \log z_i + \log B_{2d} + \gamma$
  \end{algorithmic}
\end{algorithm}

\section{Linear Evaluation of Encoders}

In Table~\ref{tab:StealImagenet}, we show results for the downstream accuracies of models stolen from the encoder pre-trained on the ImageNet dataset. The extraction of the representation model is possible at a fraction of the cost with a smaller number of queries (less than one-fifth) required to train the victim model. %
In general, the performance of the stolen encoders increases with the number of queries. We also perform a similar evaluation for victim encoders trained on the CIFAR10 %
and SVHN %
datasets and models stolen from these encoders in Tables~\ref{tab:StealCIFAR10} and~\ref{tab:StealSVHN}, respectively.

\begin{table*}[t]
\caption{\textbf{Linear Evaluation Accuracy} on a victim and stolen encoders. The victim encoder is pre-trained on the ImageNet dataset. %
}
\label{tab:StealImagenet}
\begin{center}
\begin{small}
\begin{sc}
\begin{tabular}{cccccccc}
\toprule
\# of Queries & Dataset & Data Type & CIFAR10 & CIFAR100 & STL10 & SVHN & F-MNIST\\
\midrule
\textit{Victim Encoder} & N/A & N/A & 90.33 & 71.45 & 94.9 & 79.39 & 91.9 \\
\cdashlinelr{1-8}
60K & CIFAR10 & Train/Test & 83.3 & 57.0 & 71.2 & 73.8 & 90.7  \\
250K & ImageNet & Train & \textbf{80.0} & \textbf{57.0} & \textbf{85.8} & 71.5 & 90.2 \\ %
5K & SVHN & Extra & 42.0 & 16.2 & 34.4  & 26.9  & 81.3  \\
10K & SVHN & Extra & 60.8  & 33.0 & 50.5 & 71.7 & 87.5 \\ 
50K & SVHN & Extra & 73.3 & 47.1 & 58.2 & 78.8 &  90.4 \\
100K & SVHN & Extra & 76.3  & 50.2 & 61.1 & 78.2 & 90.8 \\
200K & SVHN & Extra & 76.9 & 52.0 & 62.1 & 78.3 & 90.8 \\
250K & SVHN & Extra & 77.1 & 52.6 & 61.9 & \textbf{80.2} & \textbf{91.4} \\
\bottomrule
\end{tabular}
\end{sc}
\end{small}
\end{center}
\vskip -0.1in
\end{table*}

\begin{table*}[t]
\caption{\textbf{Linear Evaluation Accuracy} on a victim and stolen encoders. The victim encoder is pre-trained on the CIFAR10 dataset. 
}
\label{tab:StealCIFAR10}
\begin{center}
\begin{small}
\begin{sc}
\begin{tabular}{cccccccc}
\toprule
\# of Queries & Dataset & CIFAR10 & STL10 & SVHN \\
\midrule
\textit{Victim Encoder} & N/A & 87.4 & 73.4 & 49.5 \\
\cdashlinelr{1-8}
50K & SVHN & 61.2 & 51.7 & 54.8 \\
50K & CIFAR10 & 84.8 & 70.7 & 52.4 \\
50K & STL10 & 86.8 & 73.0 & 49.8 \\

\bottomrule
\end{tabular}
\end{sc}
\end{small}
\end{center}
\vskip -0.1in
\end{table*}

\begin{table*}[t]
\caption{\textbf{Linear evaluation accuracy, mutual information score, cosine similarity and p-values} on a victim and stolen encoders. The victim encoder is pre-trained on the SVHN dataset. We observe higher performance on downstream tasks with more queries and similarly observe higher similarity scores for encoders stolen with more queries (see Table~\ref{tab:num-queries}). 
}
\label{tab:StealSVHN}
\begin{center}
\begin{small}
\begin{sc}
\begin{tabular}{cccccccc}
\toprule
\# of Queries & Dataset &  CIFAR10 &  STL10 & SVHN & \new{$S(\cdot,f_v)$} & \new{$C(\cdot,f_v)$} & \new{p-value} \\
\midrule
\textit{Victim Encoder} & N/A & 57.5 & 50.6 & 80.5 & \new{1} & \new{1} & \new{9.69e-227} \\
\cdashlinelr{1-8}
500 & CIFAR10 & 24.5 & 23.3 & 19.3 & \new{0.0} & \new{0.24} & \new{6.89e-1} \\
5K & CIFAR10 & 53.3 & 45.2 & 58.1 & \new{0.11} & \new{0.40} & \new{3.51e-1} \\
7K & CIFAR10 & 57.9 & 50.9 & 69.2 & \new{0.14} & \new{0.46} & \new{4.72e-1} \\
8K & CIFAR10 & 59.6 & 52.0 & 72.6 & \new{0.53} & \new{0.47} & \new{9.87e-2} \\
9K & CIFAR10 &  59.9 & 50.8 & 71.5 & \new{0.57} & \new{0.49} & \new{6.23e-2} \\
10K & CIFAR10 & 59.1 & 51.3 & 72.3 & \new{0.69} & \new{0.52} & \new{5.82e-3} \\
20K & CIFAR10 & 60.6  & 51.5 & 73.4 & \new{0.92} & \new{0.58} & \new{2.31e-7} \\
30K & CIFAR10 &  60.7 & 52.1 & 74.1 & \new{0.93} & \new{0.63} & \new{2.11e-10} \\
50K & CIFAR10 & 59.8  & 51.6 & 75.1 & \new{0.94} & \new{0.69} & \new{1.19e-17} \\
\cdashlinelr{1-8}
50K & STL10 & 60.6 & 51.5 & 76.8 & \new{0.95} & \new{0.89} & \new{1.65e-11} \\
50K & SVHN & 55.6 & 48.7 & 82.2 & \new{0.96} & \new{0.91} & \new{1.05e-75} \\
\cdashlinelr{1-8}
\textit{Independent} & CIFAR10 & 87.4 & 73.4 & 49.5 & \new{0.90} & \new{0.009} & \new{3.56e-1} \\
\textit{Independent} & CIFAR100 & 73.8 & 61.7 & 67.7 & \new{0.90} & \new{0.007} & \new{2.13e-1} \\
\textit{Independent} & STL10 & 79.4 & 73.1 & 55.8 & \new{0.84} & \new{0.015} & \new{4.62e-1} \\
\bottomrule
\end{tabular}
\end{sc}
\end{small}
\end{center}
\vskip -0.1in
\end{table*}

\new{
\begin{table*}[t]
\caption{\new{\textbf{Linear evaluation accuracy, mutual information score, cosine similarity and p-values} on a victim and stolen encoders. The victim encoder is pre-trained on the CIFAR10 dataset. We observe higher performance on downstream tasks with more queries and similarly observe higher similarity scores for encoders stolen with more queries (see Table~\ref{tab:num-queries})} %
}
\label{tab:StealSVHN}
\begin{center}
\begin{small}
\begin{sc}
\new{
\begin{tabular}{cccccccc}
\toprule
\# of Queries & Dataset &  CIFAR10 &  STL10 & SVHN & $S(\cdot,f_v)$ & $C(\cdot,f_v)$ & p-value \\
\midrule
\textit{Victim Encoder} & N/A & 87.4 & 73.4 & 49.5 & 1 & 1 & 1.37e-16 \\
\cdashlinelr{1-8}
500 & SVHN & 22.3 & 20.8 & 20.1 & 0.00 & 0.07 & 5.69e-1 \\
1K & SVHN & 26.5 & 23.4 & 29.9 & 0.06 & 0.11 &  9.02e-1 \\
2K & SVHN & 40.0 & 35.2 & 46.7 & 0.12 & 0.13 & 7.83e-1 \\
3K & SVHN & 44.4 & 39.8 & 56.3 & 0.18 & 0.21 & 8.04e-1 \\
4K & SVHN & 47.4 & 42.5 & 60.8 & 0.22 & 0.19 &  6.30e-1 \\
5K & SVHN & 49.5 & 43.8 & 60.9 & 0.29 & 0.28 & 4.28e-1 \\
10K & SVHN & 55.6 & 46.6 & 58.9 & 0.38 & 0.36 & 6.42e-1 \\
20K & SVHN & 57.7 & 48.7 & 60.4 & 0.43 & 0.39 &  9.81e-2 \\
30K & SVHN & 58.0 & 49.2 & 57.0 & 0.57 & 0.45 & 6.73e-2  \\
40K & SVHN & 61.4 & 51.1 & 55.0 & 0.69 & 0.49 & 4.83e-2 \\
50K & SVHN & 61.2 & 51.7 & 54.8 & 0.76 & 0.50 & 3.97e-2 \\
\cdashlinelr{1-8}
50K & CIFAR10 & 84.8 & 70.7 & 52.4 & 0.84 & 0.95  & 8.73e-7 \\
50K & STL10 & 86.8 & 73.0 & 49.8 & 0.89 & 0.92 &  1.04e-2 \\
\cdashlinelr{1-8}
\textit{Independent} & SVHN & 57.5 & 50.6 & 80.5 & 0.12 & 0.0001 &  2.96e-1\\
\textit{Independent} & CIFAR100 & 73.8 & 61.7 & 67.7 & 0.63 & 0.001 & 3.67e-1 \\
\textit{Independent} & STL10 & 79.4 & 73.1 & 55.8 & 0.34 & 0.001 &  5.21e-1 \\
\bottomrule
\end{tabular}
}
\end{sc}
\end{small}
\end{center}
\vskip -0.1in
\end{table*}
}

\section{Metrics for Measuring the Quality of Stolen Encoders}

This section considers additional metrics for measuring the quality of stolen encoders. As in Section~\ref{sec:quality}, we select a random sample of $N$ inputs from the training set and find the representations of the encoders on each of the inputs. The representations for each input are then centered by subtracting the mean and normalized to be unit vectors. For each of the inputs, the $\ell_p$ norm of the difference in representations by the encoders is computed where $p=1, 2, \infty$. The final value used as the metric is then the mean of these norms over all of the inputs. Tables~\ref{tab:l1dist}, ~\ref{tab:l2dist}, and~\ref{tab:linfdist} show the results obtained for the $\ell_1, \ell_2, $ and $\ell_{\infty}$ norms respectively. In a similar way, we find the cosine similarity between representations (closely related to the $\ell_2$ norm) and present results in Table~\ref{tab:cosinesim}. All norms and the cosine similarity are able to differentiate between stolen and independent encoders, however the $\ell_1$, $\ell_2$ norms and cosine similarity have a more clear difference.

\begin{table*}[t]
\caption{\textbf{$\ell_1$} distance between normalized and centered representations from Encoders. We compute $d(\cdot,f_v)$ between the representations of a given encoder (in a row) and the Victim encoder $f_v$ where $d$ is the $\ell_1$ norm.
}
\label{tab:l1dist}
\begin{center}
\begin{small}
\begin{sc}
\begin{tabular}{cccccccc}
\toprule
& \multicolumn{2}{c}{CIFAR10} &  \multicolumn{2}{c}{SVHN} &  \multicolumn{2}{c}{ImageNet} \\ %
Encoder & Dataset & $d(\cdot,f_v)$ &  Dataset & $d(\cdot,f_v)$ &  Dataset & $d(\cdot,f_v)$ \\
    \midrule
    \multirow{4}{*}{$f_s$} & SVHN & 14.74 $\pm$ 0.22 & SVHN & 6.92 $\pm$ 0.10 & SVHN & 25.78 $\pm$ 0.26   \\
    & CIFAR10 & 5.19 $\pm$ 0.05 & CIFAR10 & 11.92 $\pm$ 0.21 & CIFAR10 & 24.89 $\pm$ 0.28 \\
    & STL10 & 6.24 $\pm$ 0.08 & STL10 & 7.22 $\pm$ 0.14 & ImageNet & 16.68 $\pm$ 0.21  \\
        \cdashlinelr{1-7}
    \multirow{3}{*}{$f_i$} 
    & SVHN & 22.20 $\pm$ 0.07 & CIFAR10 &  22.17 $\pm$ 0.06   &  SVHN & 35.45 $\pm$ 0.21 \\
    & CIFAR100 & 23.65 $\pm$ 0.07 & CIFAR100 & 23.13 $\pm$ 0.07 & CIFAR100 & 38.96 $\pm$ 0.25 \\
    \bottomrule
\end{tabular}
\end{sc}
\end{small}
\end{center}
\vskip -0.1in
\end{table*}

\begin{table*}[t]
\caption{$\ell_2$ distance between normalized and centered representations from Encoders. We compute $d(\cdot,f_v)$ between the representations of a given encoder (in a row) and the Victim encoder $f_v$ where $d$ is the $\ell_2$ norm divided by 2. 
}
\label{tab:l2dist}
\begin{center}
\begin{small}
\begin{sc}
\begin{tabular}{cccccccc}
\toprule
& \multicolumn{2}{c}{CIFAR10} &  \multicolumn{2}{c}{SVHN} &  \multicolumn{2}{c}{ImageNet} \\ %
Encoder & Dataset & $d(\cdot,f_v)$ &  Dataset & $d(\cdot,f_v)$ &  Dataset & $d(\cdot,f_v)$ \\
    \midrule
    \multirow{4}{*}{$f_s$} & SVHN & 0.49 $\pm$ 0.007 & SVHN & 0.21 $\pm$ 0.003 & SVHN & 0.55 $\pm$ 0.04   \\
    & CIFAR10 & 0.16 $\pm$ 0.02 & CIFAR10 & 0.39 $\pm$ 0.007  & CIFAR10 & 0.53 $\pm$ 0.004 \\
    & STL10 & 0.19 $\pm$ 0.003 & STL10 & 0.23 $\pm$ 0.005 & ImageNet & 0.33 $\pm$ 0.004 \\
    \cdashlinelr{1-7}
    \multirow{3}{*}{$f_i$} 
    & SVHN & 0.71 $\pm$ 0.001 & CIFAR10 & 0.70 $\pm$ 0.01 &  SVHN & 0.71 $\pm$ 0.007 \\
    & CIFAR100 & 0.71 $\pm$ 0.001 & CIFAR100 & 0.71 $\pm$ 0.01 & CIFAR100 & 0.71 $\pm$ 0.007 \\
    \bottomrule
\end{tabular}
\end{sc}
\end{small}
\end{center}
\vskip -0.1in
\end{table*}

\begin{table*}[t]
\caption{\textbf{$\ell_{\infty}$} distance between normalized and centered representations from Encoders. We compute $d(\cdot,f_v)$ between the representations of a given encoder (in a row) and the Victim encoder $f_v$ where $d$ is the $\ell_{\infty}$ norm.
}
\label{tab:linfdist}
\begin{center}
\begin{small}
\begin{sc}
\begin{tabular}{cccccccc}
\toprule
& \multicolumn{2}{c}{CIFAR10} &  \multicolumn{2}{c}{SVHN} &  \multicolumn{2}{c}{ImageNet} \\ %
Encoder & Dataset & $d(\cdot,f_v)$ &  Dataset & $d(\cdot,f_v)$ &  Dataset & $d(\cdot,f_v)$ \\
    \midrule
    \multirow{4}{*}{$f_s$} & SVHN & 0.13 $\pm$ 0.003 & SVHN & 0.10 $\pm$ 0.002 & SVHN & 0.27 $\pm$ 0.005   \\
    & CIFAR10 & 0.06 $\pm$ 0.001 & CIFAR10 & 0.19 $\pm$ 0.004  & CIFAR10 & 0.26 $\pm$ 0.005 \\
    & STL10 & 0.08 $\pm$ 0.002 & STL10 & 0.12 $\pm$ 0.003 & ImageNet & 0.15 $\pm$ 0.004 \\
    \cdashlinelr{1-7}
    \multirow{3}{*}{$f_i$} 
    & SVHN & 0.256 $\pm$ 0.003 & CIFAR10 & 0.31 $\pm$ 0.006 &  SVHN & 0.33 $\pm$ 0.004 \\
    & CIFAR100 & 0.263 $\pm$ 0.003 & CIFAR100 & 0.31 $\pm$ 0.006 & CIFAR100 & 0.27 $\pm$ 0.006 \\
    \bottomrule
\end{tabular}
\end{sc}
\end{small}
\end{center}
\vskip -0.1in
\end{table*}

\begin{table*}[t]
\caption{Cosine similarity between normalized and centered representations from Encoders. We compute $\mathrm{sim}(\cdot,f_v)$ between the representations of a given encoder (in a row) and the Victim encoder $f_v$ where $\mathrm{sim}$ is the cosine similarity.
}
\label{tab:cosinesim}
\begin{center}
\begin{small}
\begin{sc}
\begin{tabular}{cccccccc}
\toprule
& \multicolumn{2}{c}{CIFAR10} &  \multicolumn{2}{c}{SVHN} &  \multicolumn{2}{c}{ImageNet} \\ %
Encoder & Dataset & $\mathrm{sim}(\cdot,f_v)$ &  Dataset & $\mathrm{sim}(\cdot,f_v)$ &  Dataset & $\mathrm{sim}(\cdot,f_v)$ \\
    \midrule
    \multirow{4}{*}{$f_s$} & SVHN & 0.50 $\pm$ 0.01 &  SVHN & 0.91 $\pm$ 0.003 & SVHN & 0.39 $\pm$ 0.007    \\
    & CIFAR10 & 0.95 $\pm$ 0.01 & CIFAR10 & 0.69 $\pm$ 0.01  & CIFAR10 & 0.43 $\pm$ 0.008  \\
    & STL10 & 0.92 $\pm$ 0.002 & STL10 & 0.89 $\pm$ 0.005 & ImageNet & 0.78 $\pm$ 0.005  \\
    \cdashlinelr{1-7}
    \multirow{3}{*}{$f_i$}
    & SVHN & 0.00013 $\pm$ 0.004 & CIFAR10 & 0.009 $\pm$ 0.002  &  SVHN & 0.002 $\pm$ 0.002  \\
    & CIFAR100 & 0.0007 $\pm$ 0.004  & CIFAR100 & -0.007 $\pm$ 0.003 & CIFAR100 & -0.0018 $\pm$ 0.002 \\
    \bottomrule
\end{tabular}
\end{sc}
\end{small}
\end{center}
\vskip -0.1in
\end{table*}

\begin{figure}[!t]
\begin{center}
\centering
\includegraphics[width=0.7\columnwidth]{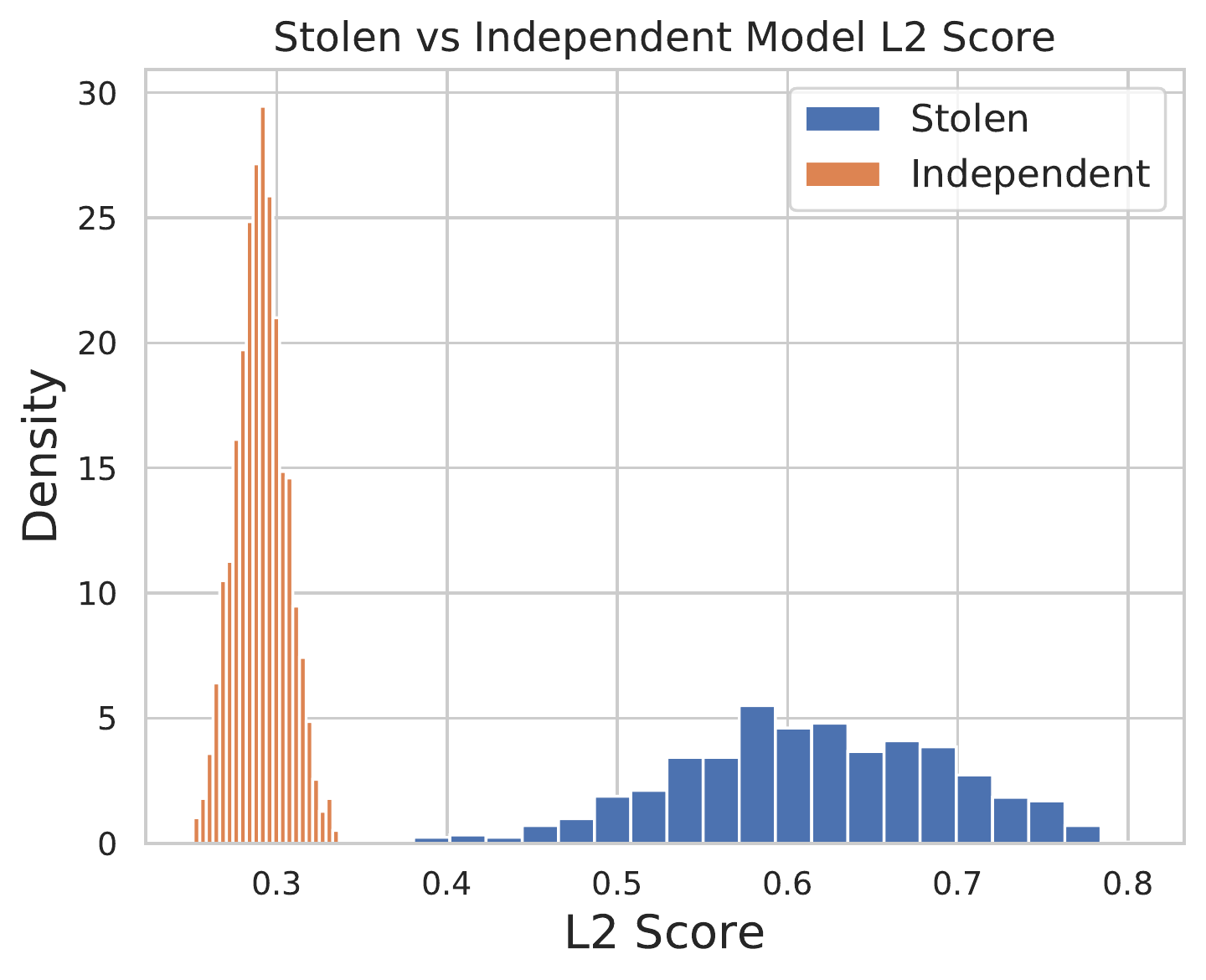}
\caption{
  Distribution of the normalized \textbf{L2 score} for the representations of an SVHN victim encoder, a stolen encoder from it (using CIFAR10 training data) and a random encoder (trained on CIFAR100). There is a pronounced difference in the distribution of the distances between stolen and independent encoders. This histogram relates to the values presented in Table~\ref{tab:l2dist}. %
}
\label{fig:l2dist}
\end{center}
\end{figure}

\begin{figure}[!t]
\begin{center}
\centering
\includegraphics[width=0.7\columnwidth]{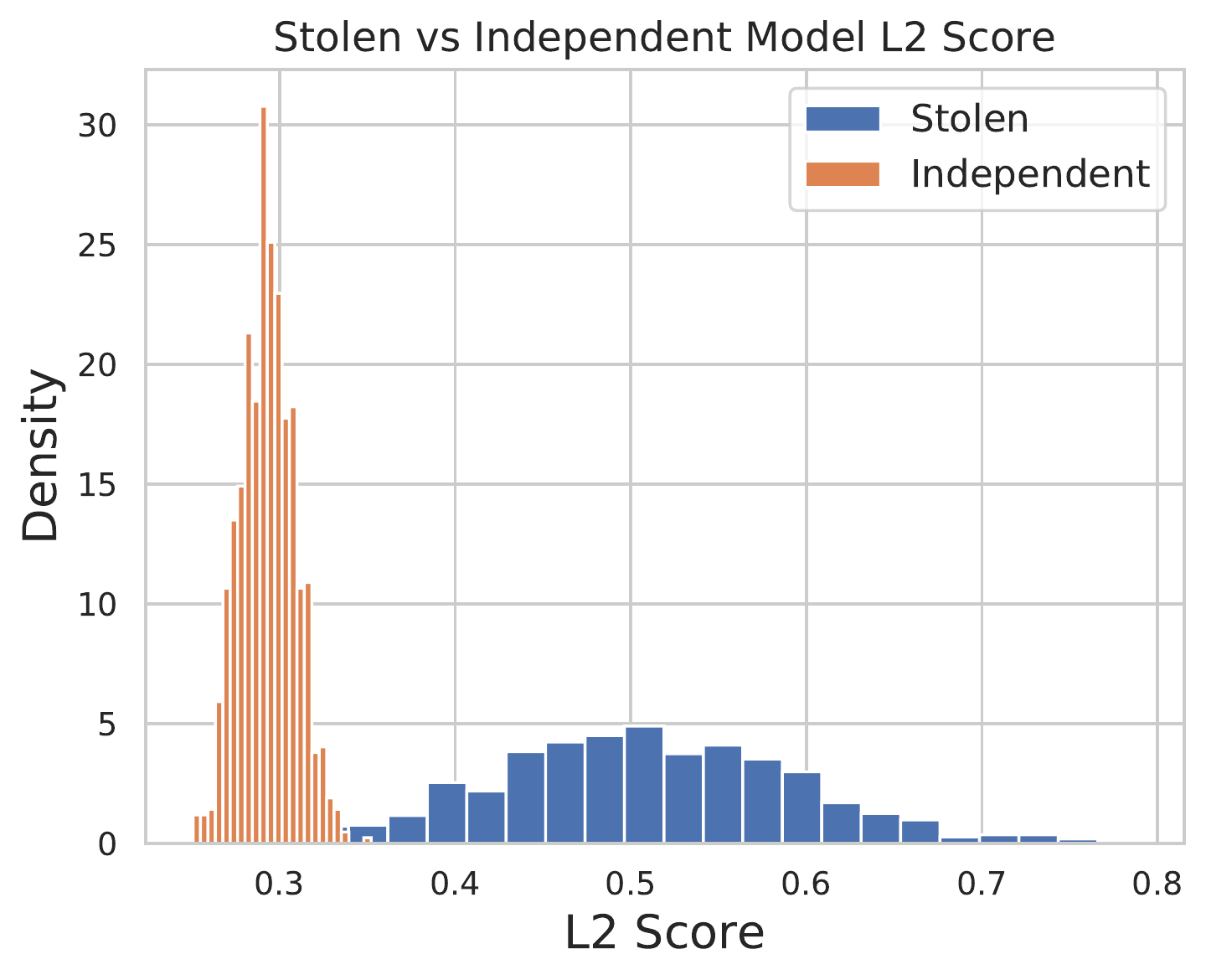}
\caption{
  Distribution of the normalized \textbf{L2 score} for the representations of a CIFAR10 victim encoder, a stolen encoder from it (using SVHN training data) and a random encoder (trained on CIFAR100). There is a pronounced difference in the distribution of the distances between stolen and independent encoders. This histogram relates to the values presented in Table~\ref{tab:l2dist}. %
}
\label{fig:l2dist}
\end{center}
\end{figure}

\begin{figure}[!t]
\begin{center}
\centering
\includegraphics[width=0.7\columnwidth]{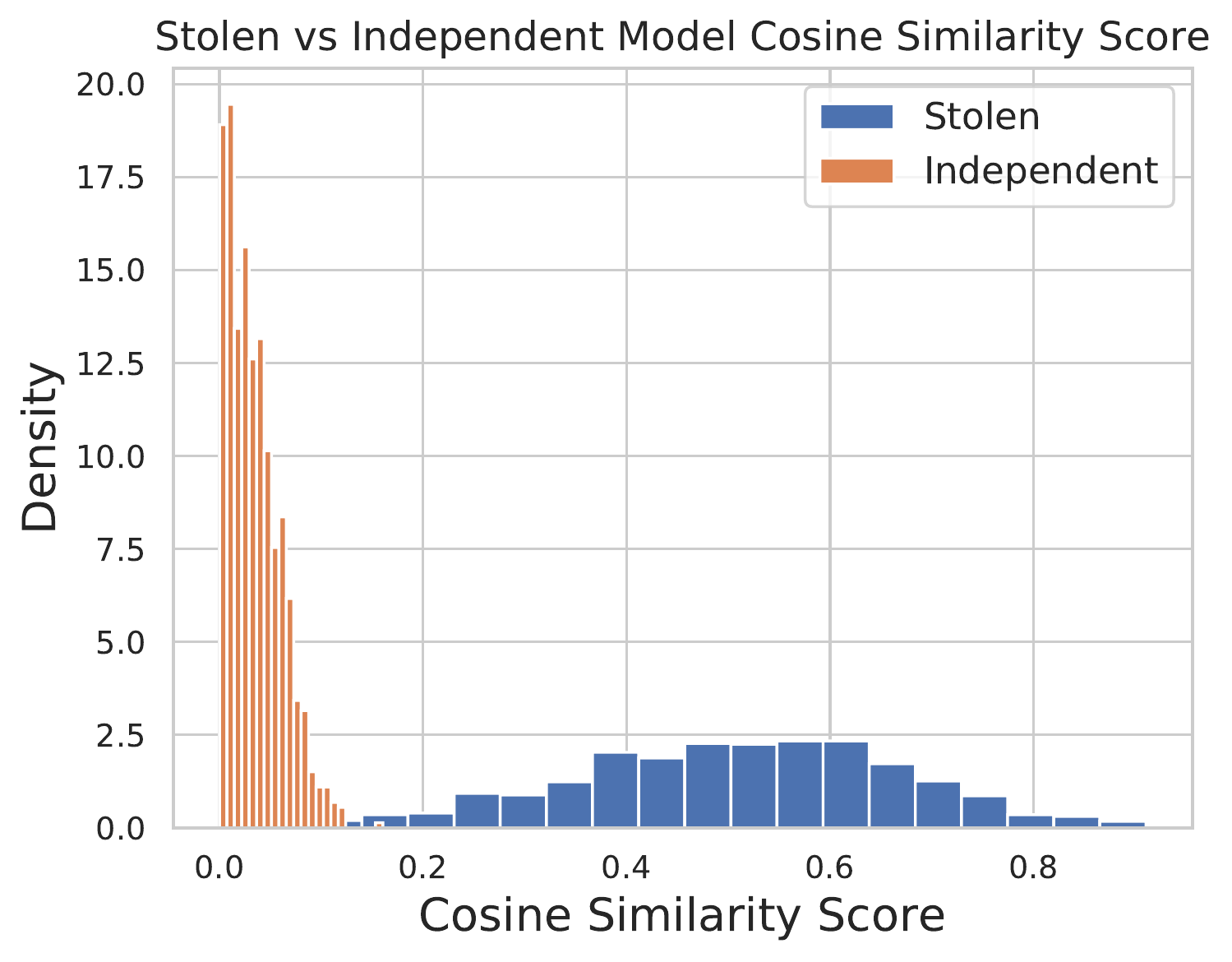}
\caption{
  Distribution of the \textbf{cosine similarity scores} for the representations of a CIFAR10 victim encoder, a stolen encoder from it (using SVHN training data) and a random encoder (trained on CIFAR100). There is a pronounced difference in the distribution of the scores between stolen and independent encoders. This histogram relates to the values presented in Table~\ref{tab:cosinesim}. %
}
\label{fig:l2dist}
\end{center}
\end{figure}

\begin{table}[t]
 \caption{\textbf{$\ell_2$ score vs the number of queries}. The quality of the stolen encoder should increase with respect to the number of queries. Therefore, we should be able to observe an increasing trend for the $\ell_2$ score.     
 }
    \label{tab:l2_score_query}
    \scriptsize
    \centering
    \begin{tabular}{cccccccc}
    \toprule
         Victim Encoder & Stolen dataset &\multicolumn{6}{c}{Number of Queries} \\
         \midrule
         \multirow{2}{*}{CIFAR10} & \multirow{2}{*}{SVHN} & 500 & 5K & 10K & 20K & 30K & 50K \\ \cdashlinelr{3-8}
         & & 0.322 $\pm$ 0.005 & 0.372 $\pm$ 0.005 & 0.411 $\pm$ 0.006 & 0.421 $\pm$ 0.006 & 0.475 $\pm$ 0.007 & 0.511 $\pm$ 0.007 \\
         \multirow{2}{*}{CIFAR10} & \multirow{2}{*}{STL10} & 500 & 5K & 10K & 20K & 30K & 50K \\ \cdashlinelr{3-8}
         & & 0.564 $\pm$ 0.007 & 0.749 $\pm$ 0.004 & 0.781 $\pm$ 0.003  & 0.795 $\pm$ 0.003 & 0.811 $\pm$ 0.002 & 0.807 $\pm$ 0.003 \\
         \midrule
          \multirow{2}{*}{SVHN} & \multirow{2}{*}{CIFAR10} & 500 & 5K & 10K & 20K & 30K & 50K \\ \cdashlinelr{3-8}
         & & 0.386 $\pm$ 0.006 & 0.459 $\pm$ 0.007  & 0.516 $\pm$ 0.007 & 0.551 $\pm$ 0.007 & 0.580 $\pm$ 0.007 & 0.614 $\pm$ 0.007 \\
         \multirow{2}{*}{SVHN} & \multirow{2}{*}{STL10} & 500 & 5K & 10K & 20K & 30K & 50K \\ \cdashlinelr{3-8}
         & & 0.458 $\pm$ 0.007 & 0.611 $\pm$ 0.007 & 0.667 $\pm$ 0.006 & 0.736 $\pm$ 0.005 & 0.738 $\pm$ 0.005 & 0.769 $\pm$ 0.005 \\
         \midrule
         \multirow{2}{*}{ImageNet} & \multirow{2}{*}{SVHN} & 5k & 10k & 50k & 100k & 200k & 250k \\ \cdashlinelr{3-8}
         & & 0.390 $\pm$ 0.001 & 0.418 $\pm$ 0.001 & 0.444 $\pm$ 0.001 & 0.446 $\pm$ 0.001 & 0.452 $\pm$ 0.001 & 0.450 $\pm$ 0.001 \\
         \multirow{2}{*}{ImageNet} & \multirow{2}{*}{ImageNet} & 5k & 10k & 50k & 100k & 200k & 250k \\ \cdashlinelr{3-8}
         & & 0.407 $\pm$ 0.001 & 0.493 $\pm$ 0.001 & 0.448 $\pm$ 0.001 & 0.515 $\pm$ 0.001 & 0.661 $\pm $ 0.001 &  0.674 $\pm$ 0.001 \\
         
  \bottomrule
    \end{tabular}
\end{table}

\begin{table}[t]
 \caption{\textbf{Cosine similarity score vs the number of queries}. The quality of the stolen encoder should increase with respect to the number of queries. Therefore, we should be able to observe a similar increasing trend for the cosine similarity score. %
 }
    \label{tab:cosine_sim_query}
    \scriptsize
    \centering
    \begin{tabular}{cccccccc}
    \toprule
         Victim Encoder & Stolen dataset &\multicolumn{6}{c}{Number of Queries} \\
         \midrule
         \multirow{2}{*}{CIFAR10} & \multirow{2}{*}{SVHN} & 500 & 5K & 10K & 20K & 30K & 50K \\ \cdashlinelr{3-8}
         & & 0.073 $\pm$ 0.012 & 0.205 $\pm$ 0.012  & 0.294 $\pm$ 0.014 & 0.318 $\pm$ 0.014 & 0.433 $\pm$ 0.015 & 0.508 $\pm$ 0.013 \\
          \multirow{2}{*}{CIFAR10} & \multirow{2}{*}{STL10} & 500 & 5K & 10K & 20K & 30K & 50K \\ \cdashlinelr{3-8}
        & & 0.606 $\pm$ 0.012 & 0.869 $\pm$ 0.004 & 0.901 $\pm$ 0.003  & 0.913 $\pm$ 0.003 & 0.927 $\pm$ 0.002 & 0.923 $\pm$ 0.002 \\
         \midrule
         \multirow{2}{*}{SVHN} & \multirow{2}{*}{CIFAR10} & 500 & 5K & 10K & 20K & 30K & 50K \\ \cdashlinelr{3-8}
         & & 0.235 $\pm$ 0.014 & 0.400 $\pm$ 0.015 & 0.518 $\pm$ 0.013 & 0.582 $\pm$ 0.013  & 0.632 $\pm$ 0.012  & 0.689 $\pm$ 0.010  \\
         \multirow{2}{*}{SVHN} & \multirow{2}{*}{STL10} & 500 & 5K & 10K & 20K & 30K & 50K \\ \cdashlinelr{3-8}
         & & 0.396 $\pm$ 0.015 & 0.682 $\pm$ 0.012 & 0.767 $\pm$ 0.009 & 0.852 $\pm$ 0.006  & 0.856 $\pm$ 0.006  & 0.887 $\pm$ 0.005  \\
         \midrule
         \multirow{2}{*}{ImageNet} & \multirow{2}{*}{SVHN} & 5k & 10k & 50k & 100k & 200k & 250k \\\cdashlinelr{3-8}
         & & 0.254 $\pm$ 0.001 & 0.320 $\pm$ 0.001 & 0.377 $\pm$ 0.002 & 0.383 $\pm$ 0.002 & 0.395 $\pm$ 0.002 & 0.391 $\pm$ 0.002 \\
         \multirow{2}{*}{ImageNet} & \multirow{2}{*}{ImageNet} & 5k & 10k & 50k & 100k & 200k & 250k \\\cdashlinelr{3-8}
         & & 0.295 $\pm$ 0.001 & 0.480 $\pm$ 0.002  & 0.386 $\pm$ 0.002 & 0.526 $\pm$ 0.002 & 0.766 $\pm$ 0.001  & 0.783 $\pm$ 0.001 \\
  \bottomrule
    \end{tabular}
\end{table}

\subsection{Analysis of Distance and Cosine Similarity Based Metrics} \label{sec:l2analysis}

We use the $\ell_2$ score based on the $\ell_2$ norm of the distance between representations, and the cosine similarity score between representations as ways to measure the quality of stolen encoders and differentiate between stolen and independent encoders. 

We create two score metrics, namely the cosine similarity score $C$: %
\begin{equation} \label{eq: cosscore}
    C = |\mathrm{sim}(a,b)| 
\end{equation}

where $\mathrm{sim}(a,b) = \frac{a^Tb}{||a||_2||b||_2}$ is the cosine similarity between representation vectors $a$ and $b$, and the $\ell_2$ distance score which transforms the $\ell_2$ norm of the difference as: 
\begin{equation} \label{eq: l2score}
  Score_{\ell_2} = 1 - \frac{1}{2} \left\lVert \frac{a}{\lVert a \rVert_2} - \frac{b}{\lVert b \rVert_2} \right\rVert_2 
\end{equation}

We first note that there are various ways by which an attacker may steal an encoder. To simplify the analysis, we consider two main cases, one where the attacker minimizes the mean squared error between its representations and the representations returned by the victim and the other where the attacker minimizes the InfoNCE contrastive loss (other contrastive loss functions are similar). 

With the MSE loss, the attacker directly minimizes the mean squared error between its representations and the representations of the victim encoder on the queries it makes. Let $x_i$ be a query made by an attacker and $f_v(x_i)  = h_{v_i}, f_s(x_i) = h_{s_i} \in \mathcal{R}^n$ be the representations of the victim and stolen encoders respectively for this query. The MSE loss between these two representations is then $\frac{1}{n} \sum_{j=1}^{n} (h_{{v_i}_j} - h_{{s_i}_j})^2 $. Comparatively, the $\ell_2$ distance ($\ell_2$ norm of the difference) between these two representations is $||h_{v_i} - h_{s_i}||_2 = \sqrt{\sum_{j=1}^n (h_{{v_i}_j} - h_{{s_i}_j})^2}$. From these two expressions, it follows directly that minimizing the MSE loss, which is equivalent to minimizing $\sum_{j=1}^{n} (h_{{v_i}_j} - h_{{s_i}_j})^2 $, also minimizes the $\ell_2$ distance. 

We now consider the case where an attacker uses a contrastive loss function such as the InfoNCE loss~\citep{cpc}, specifically as used in~\citep{simclr}. The InfoNCE loss consists of positive and negative pairs and encourages positive pairs to have similar representations and negative pairs to have dissimilar representations. When stealing from a victim encoder, the attacker uses its own representation and the representation from the victim encoder for a single query as a positive pairs while the other inputs in the batch are considered negative pairs. Given an input batch of queries $\{x_1, x_2, \dots, x_m\}$ sent by an attacker, the representations for each query from both the victim and stolen encoders are concatenated as $\{h_{s_1}, h_{s_2}, \dots, h_{s_m}, h_{v_1}, h_{v_2}, \dots, h_{v_m} \}$ where $h_{s_i} = f_s(x_i)$ and $h_{v_i} = f_v(x_i)$ are the representations by the stolen and victim encoders respectively. The positive pairs are $(h_{s_1}, h_{v_1}), \dots, (h_{s_m}, h_{v_m})$ and the loss between a positive pair of samples $(h_{s_i}, h_{v_i})$ is defined as $l(s_i, v_i) = -\log \frac{\exp{(\mathrm{sim}(h_{s_i},h_{v_i})} / \tau )}{\sum_{k=1}^{2m} \mathbb{1}_{[k \neq s_i]} \exp{(\mathrm{sim}(h_{s_i}, h_{k}) / \tau)}}$. Here $\mathrm{sim}$ is the cosine similarity function ($\mathrm{sim} (u, v) = \frac{u^{T}v}{||u||_2||v||_2}$ ), $\tau$ is the temperature parameter, and $\mathbb{1}_{[k \neq s_i]}$ is an indicator function equal to 1 iff $k \neq s_i$ and 0 otherwise. The overall loss for the batch is the sum of the losses over each positive pair i.e. $\mathcal{L} = \frac{1}{2m} \sum_{c=1}^m [ l(s_c, v_c) + l(v_c, s_c)]$. We can combine the log terms to rewrite this loss function as $\mathcal{L} = - \frac{1}{2m} \log \frac{\prod_{c=1}^{m} (\exp{(\mathrm{sim}(h_{s_c},h_{v_c})} / \tau ))^2}{\prod_{c=1}^{m} (\sum_{k=1}^{2m} \mathbb{1}_{[k \neq s_c]} \exp{(\mathrm{sim}(h_{s_c}, h_{k}) / \tau)})(\sum_{k=1}^{2m} \mathbb{1}_{[k \neq v_c]} \exp{(\mathrm{sim}(h_{v_c}, h_{k}) / \tau)})}$. Note that $\exp(r) > 0 \, \forall r$ so that the loss $\mathcal{L}$ is always positive (the denominator contains the terms in the numerator and each term is positive so the fraction is < 1 and has a negative log). 

Simplifying $\mathcal{L}$ by combining the exponents in the numerator and using $\log \frac{a}{b} = \log a - \log b$ gives:

$\mathcal{L} =  \frac{1}{2m} (\log \prod_{c=1}^{m} (\sum_{k=1}^{2m} \mathbb{1}_{[k \neq s_c]} \exp{(\mathrm{sim}(h_{s_c}, h_{k}) / \tau)})(\sum_{k=1}^{2m} \mathbb{1}_{[k \neq v_c]} \exp{(\mathrm{sim}(h_{v_c}, h_{k}) / \tau)}) - \frac{1}{2m} (\log \exp(\sum_{c=1}^{m} (2 \cdot \mathrm{sim} (h_{s_c}, h_{v_c}) / \tau)))$

$\mathcal{L} = \frac{1}{2m} (\log \prod_{c=1}^{m} (\sum_{k=1}^{2m} \mathbb{1}_{[k \neq s_c]} \exp{(\mathrm{sim}(h_{s_c}, h_{k}) / \tau)})(\sum_{k=1}^{2m} \mathbb{1}_{[k \neq v_c]} \exp{(\mathrm{sim}(h_{v_c}, h_{k}) / \tau)}) - \frac{1}{m} (\sum_{c=1}^{m} (\mathrm{sim} (h_{s_c}, h_{v_c}) / \tau)) $

We now note that the cosine similarity is such that $-1 \leq \mathrm{sim}(a,b) \leq 1$. Therefore minimizing the loss corresponds to maximizing the sum of the cosine similarities between positive pairs $\sum_{c=1}^{m} (\mathrm{sim} (h_{s_c}, h_{v_c}) / \tau))$ (as this term is subtracted and the the overall loss is positive). This then corresponds to maximizing the individual cosine similarities $\mathrm{sim} (h_{s_c}, h_{v_c})$. In other words, the similarity between the representation of the victim and stolen encoders on the query samples $x_i$ is maximized through the loss function. Note that the first term of the loss corresponds to minimizing the similarity between negative pairs, however, we do not focus on that aspect of the loss as part of this analysis.  We now consider the relationship between the $\ell_2$ norm of the difference of two unit vectors $a$ and $b$ and the cosine similarity $\mathrm{sim} (a,b)$ through the following theorem: 

\begin{theorem} \label{thm: l2cosine-appendix}
$||a-b||_2 = \sqrt{2(1 - \mathrm{sim} (a,b))}$ \text{ for unit vectors } $a,b$.
\end{theorem}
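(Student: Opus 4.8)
The plan is to expand the squared Euclidean norm $\|a-b\|_2^2$ using the inner product and then invoke the unit-norm hypothesis to simplify. First I would write $\|a-b\|_2^2 = (a-b)^\top(a-b) = a^\top a - 2\,a^\top b + b^\top b$, which holds for any vectors. Since $a$ and $b$ are unit vectors, $a^\top a = \|a\|_2^2 = 1$ and likewise $b^\top b = 1$, so this collapses to $\|a-b\|_2^2 = 2 - 2\,a^\top b$.

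Next I would connect $a^\top b$ to the cosine similarity. By definition $\mathrm{sim}(a,b) = \frac{a^\top b}{\|a\|_2\|b\|_2}$, and because the denominator equals $1$ under the unit-norm assumption, we get $\mathrm{sim}(a,b) = a^\top b$. Substituting this in yields $\|a-b\|_2^2 = 2\bigl(1 - \mathrm{sim}(a,b)\bigr)$.

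Finally I would take square roots of both sides. The left-hand side is a norm, hence nonnegative; the right-hand side is nonnegative because the Cauchy--Schwarz inequality gives $\mathrm{sim}(a,b) = a^\top b \le \|a\|_2\|b\|_2 = 1$, so $1 - \mathrm{sim}(a,b) \ge 0$ and the square root is well defined. This gives $\|a-b\|_2 = \sqrt{2\bigl(1 - \mathrm{sim}(a,b)\bigr)}$, as claimed.

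There is no real obstacle here: the statement is a one-line algebraic identity, and the only point worth stating explicitly is the observation that the quantity under the radical is nonnegative (so the identity is not vacuous). The same computation immediately justifies the transformed score $\mathrm{Score}_{\ell_2} = 1 - \tfrac12\|a/\|a\|_2 - b/\|b\|_2\|_2$ used alongside it, since after normalization the arguments are unit vectors.
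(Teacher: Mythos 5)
Your proof is correct and follows essentially the same route as the paper's: expand $\|a-b\|_2^2 = a^\top a - 2a^\top b + b^\top b$, use the unit-norm hypothesis to reduce it to $2(1-\mathrm{sim}(a,b))$, and take square roots. Your added remark that the radicand is nonnegative (via Cauchy--Schwarz) is a small bonus the paper leaves implicit, but the argument is otherwise identical.
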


\begin{proof} \label{thm:l2cosine-proof}
$\mathrm{sim} (a,b) = \frac{a^Tb}{||a||_2||b||_2} = a^Tb$, since $a$ and $b$ are unit vectors. %

$||a-b||_2^2 = (a-b)^T(a-b) = (a^T-b^T)(a-b) = a^Ta - a^Tb - b^Ta + b^Tb = ||a||_2 - 2 a^Tb + ||b||_2 = 1 - 2 a^Tb + 1 = 2 - 2 a^Tb = 2(1 - sim(a,b))$

$\therefore ||a-b||_2 = \sqrt{2(1 - \mathrm{sim} (a,b))}$
\end{proof}

Therefore maximizing the cosine similarity $\mathrm{sim}(h_{s_i},h_{v_i})$ through the InfoNCE loss means minimizing the $\ell_2$ norm of the difference between the normalized representations $h'_{s_i}, h'_{v_i}$. Similarly, minimizing the $\ell_2$ norm through the MSE loss corresponds to maximizing the cosine similarity. It also follows from this theorem that the $\ell_2$ distance $||a-b||_2$ is such that $0 \leq ||a-b||_2 \leq 2$. We therefore divide the $\ell_2$ distance by 2 to get the distance to be between 0 and 1. Transforming the $\ell_2$ distance into the $\ell_2$ score, allows us to relate it more closely to the cosine similarity so that an increase in the cosine similarity corresponds to an increase in the $\ell_2$ score. The metrics are also made to have ranges between 0 and 1 through these scores. %

Theorem \ref{thm: l2cosine-appendix} allows us to relate our $\ell_2$ score and cosine similarity score. We have: 

\begin{align*}
    Score_{\ell_2} &= 1 - \frac{1}{2} \left\lVert \frac{a}{\lVert a \rVert_2} - \frac{b}{\lVert b \rVert_2} \right\rVert_2 \\
    &= 1 - \frac{1}{2} \sqrt{2(1-\mathrm{sim} (a,b)) } \\
\end{align*}

This can equivalently be written as: 

\begin{align*}
    \mathrm{sim} (a,b) &= 1 - 2 (1 - Score_{\ell_2})^2 \\
    C &= | 1 - 2 (1 - Score_{\ell_2})^2 |
\end{align*}

When computing the distances and similarity, centering and normalizing the representations before computing the scores is important to get useful metrics. Centering (i.e. subtracting the mean of the elements in each representation from each element) allows for the values in the representations to be distributed in a similar way about the mean so that values above the mean are positive while values below the mean are negative. Normalizing the representations scales the values in the representations from the two encoders being compared to a similar range which then makes the metrics consistent across different encoders where representations may have different ranges of values. Moreover, normalizing the representations allows for Theorem \ref{thm: l2cosine-appendix} to be applied which then gives us bounded score metrics and a relationship between the $\ell_2$ score and cosine similarity score.

\section{Additional Figures}

In this section, we present additional figures. 

\Cref{fig:dataset-inference-overview} presents the full overview of our dataset inference method for the self-supervised models. \Cref{fig:dataset-inference-only-test} presents the resolution of the encoder ownership (this is a simplified version of~\Cref{fig:dataset-inference-overview}).

\begin{figure}[t]
\vskip 0in
\begin{center}
\centerline{\includegraphics[width=1.0\columnwidth]{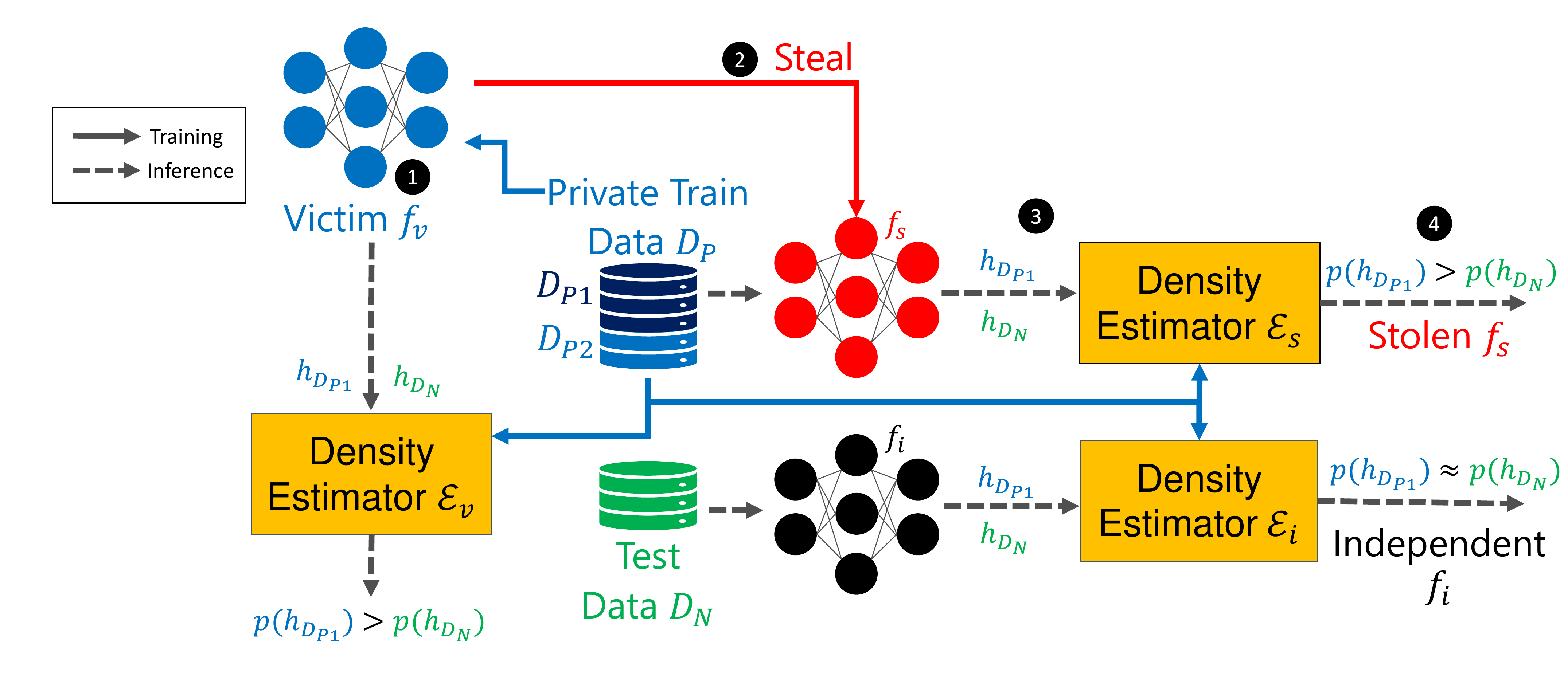}}
\caption{
\label{fig:dataset-inference-overview}
\new{
\textbf{Dataset Inference on Encoders.} 
\small{
\encircle{1} Victim trains encoder $f_v$ using private training data $D_P$.
\encircle{2} Adversary steals $f_v$: submit queries from dataset $D_S$ and obtain representations $h_{D_S}$ to train the stolen encoder $f_s$.
\encircle{3} Arbitrator trains density estimators: divide $D_P$ into non-overlapping partitions $D_{P1}$ and $D_{P2}$, and train density estimators $\gE_v$, $\gE_s$ and $\gE_i$ using the representations of $f_v$, $f_s$, $f_i$ on $D_{P2}$, respectively.
\encircle{4} Arbitrator performs dataset inference: apply $\gE_v$, $\gE_s$ and $\gE_i$ on the representations of $D_{P1}$ and $D_{N}$ of each encoder. For the victim and stolen encoders, the \prob of the representations of $D_{P1}$ is significantly higher than $D_N$, whereas, for an independent encoder, the \probs of the representations are not significantly different.
}
}
}
\end{center}
\vskip -0.4in
\end{figure}

\section{Number of Queries For Dataset Inference}

In~\Cref{tab:data-inference-fine-tune-samples}, we check how the dataset inference performs after fine-tuning the stolen model with a different number of samples.

In~\Cref{tab:data-inference-fine-tune-epochs}, we check how the dataset inference performs after fine-tuning the stolen model with a different number of epochs.

\begin{table}[t]
\caption{\new{
\textbf{
Dataset Inference for fine-tuning with a different number of samples.} 
We detect if a given encoder was stolen after fine-tuning with a different number of samples. 
We use a stolen encoder from the SVHN victim model and then retrain it with standard contrastive training using 5 epochs and data from CIFAR10.} %
}
\label{tab:data-inference-fine-tune-samples}
\scriptsize
    \centering
    \new{
    \begin{tabular}{ccc}
    \toprule
        \# of points & p-value & $\Delta \mu$ \\ %
        \midrule
        5K & 3.24e-16 & 7.03\\    %
        10K & 1.82e-16 & 7.14\\   %
        20K & 6.91e-14 & 6.09 \\   %
        50K & 5.28e-12 & 5.53\\   %
        \bottomrule
    \end{tabular}
    }
    \vskip -0.1in
\end{table}

\begin{table}[t]
\caption{\new{
\textbf{
Dataset Inference for fine-tuning with a different number of epochs.} 
We detect if a given encoder was stolen after fine-tuning with a different number of epochs. 
We use a stolen encoder from the SVHN victim model and then retrain it with standard contrastive learning using 50K data points from CIFAR10.} %
}
\label{tab:data-inference-fine-tune-epochs}
\scriptsize
    \centering
    \new{
    \begin{tabular}{ccc}
    \toprule
        \# of epochs & p-value & $\Delta \mu$ \\
        \midrule
        5 & 5.28e-12 & 5.53 \\
        10 & 8.73e-6 & 4.62 \\
        25 & 6.81e-1 & 1.34 \\
        50 & 1.73e-1  & 0.92 \\
        100 & 8.53e-1  & -0.53\\
        \bottomrule
    \end{tabular}
    }
    \vskip -0.1in
\end{table}

\end{document}